\documentclass[letterpaper, 10 pt, conference]{ieeeconf}  %

\IEEEoverridecommandlockouts                              %

\overrideIEEEmargins                                      %

\usepackage{graphics} %
\usepackage{amsmath} %
\usepackage{amssymb}  %

\usepackage{url}
\usepackage{bbm}
\usepackage{todonotes}
\usepackage{pifont}
\usepackage{amsthm}
\usepackage{caption}
\usepackage{subcaption}
\usepackage{cite}
\usepackage{tikz}
\usepackage{algorithm}
\usepackage{algorithmic}
\usepackage{relsize}
\newtheorem{theorem}{Theorem}
\newtheorem{lemma}{Lemma}
\newtheorem{corollary}{Corollary}
\newtheorem{definition}{Definition}
\newtheorem{problem}{Problem}
\newtheorem{remark}{Remark}
\usetikzlibrary{snakes,arrows,shapes}

\linespread{0.97}

\makeatletter
\renewenvironment{proof}[1][\proofname]{\par
\pushQED{\qed}%
\normalfont \topsep6\p@\@plus6\p@\relax
\trivlist
\item\relax
{\itshape
#1\@addpunct{.}}\hspace\labelsep\ignorespaces
}{%
\popQED\endtrivlist\@endpefalse
}
\newcommand\fs@betterruled{%
  \def\@fs@cfont{\bfseries}\let\@fs@capt\floatc@ruled
  \def\@fs@pre{\vspace*{4pt}\hrule height.8pt depth0pt \kern2pt}%
  \def\@fs@post{\kern2pt\hrule\relax\vspace*{-9pt}}%
  \def\@fs@mid{\kern2pt\hrule\kern2pt}%
  \let\@fs@iftopcapt\iftrue}
\floatstyle{betterruled}
\restylefloat{algorithm}
\makeatother
\captionsetup[algorithm]{labelfont=bf}

\title{\LARGE \bf
Control Synthesis from Linear Temporal Logic Specifications using Model-Free Reinforcement Learning
}

\author{Alper Kamil Bozkurt, Yu Wang, Michael M. Zavlanos, and Miroslav Pajic%
\thanks{This work is sponsored in part by the ONR under agreements N00014-17-1-2504, N00014-20-1-2745 and N00014-18-1-2374, AFOSR award number FA9550-19-1-0169, and the NSF CNS-1652544 and CNS-1932011 grants.}%
\thanks{Alper Kamil Bozkurt, Yu Wang, Michael M. Zavlanos and Miroslav Pajic are with Duke University, Durham, NC 27708, USA, {\tt\small \{alper.bozkurt, yu.wang094, michael.zavlanos, miroslav.pajic\}@duke.edu}.}%
}

\iffalse
\author{Alper Kamil Bozkurt$^{1}$, Yu Wang$^{1}$, Michael M. Zavlanos$^{2}$ and Miroslav Pajic$^{1}$%
\thanks{This work is sponsored in part by the ONR under agreements N00014-17-1-2012 and N00014-17-1-2504, AFOSR under the award number FA9550-19-1-0169, as well as the NSF CNS-1652544 and -1932011 grants.}%
\thanks{$^{1}$Alper Kamil Bozkurt, Yu Wang and Miroslav Pajic are with the Department of Electrical and Computer Engineering, Duke University, Durham, NC, {\tt\small \{alper.bozkurt, yu.wang094, miroslav.pajic\}@duke.edu}}%
\thanks{$^{2}$Michael M. Zavlanos is with the Department of Mechanical Engineering and Material Science, Duke University, Durham, NC, {\tt\small michael.zavlanos@duke.edu} .}%
}
\fi

\begin{document}

\maketitle
\thispagestyle{empty}
\pagestyle{empty}

\newcommand{\yw}[1]{{\color{red} #1}}

\begin{abstract}
We present a reinforcement learning (RL) framework to synthesize a control policy from a given linear temporal logic (LTL) specification in an unknown stochastic environment that can be modeled as a Markov Decision Process (MDP). Specifically, we learn a policy that maximizes the probability of satisfying the LTL formula without learning the transition probabilities. We introduce a novel rewarding and discounting mechanism based on the LTL formula such that (i) an optimal policy maximizing the total discounted reward effectively maximizes the  probabilities of satisfying LTL objectives, and (ii) a model-free RL algorithm using these rewards and discount factors is guaranteed to converge to such a policy. Finally, we illustrate the applicability of our RL-based synthesis approach on two motion planning case studies. 
\end{abstract}

\section{Introduction}

Formal logics have been used to facilitate robot motion planning beyond its traditional focus on computing robot trajectories that, starting from an initial region, reach a desired goal without hitting any obstacles (e.g.,~\cite{karaman2011sampling, karaman2011anytime}). 
Linear Temporal Logic (LTL) is a widely used framework for formal specification of high-level robotic tasks on discrete~models. 
Thus, control synthesis on discrete-transition systems for LTL objectives has attracted a lot of attention (e.g.,~\cite{vasile2013,smith2011,chen2012,kantaros2017,wolff2014}).

Another line of work considers motion planning for LTL objectives for systems that exhibit uncertainty coming from either robot dynamics or the environment, such as  Markov Decision Processes (MDPs)~\cite{guo2018,guo2015multi,kantaros2019,lahijanian2012,wolff2012, kwiatkowska2013, ding2014}.
By synthesizing control for an MDP, from a given LTL objective, %
the obtained  controller maximizes the probability of satisfying the specification. 
Also, tools from probabilistic model checking~\cite{baier2008} can be directly used for synthesis. 
Yet, when the MDP transition probabilities are not known a priori, the control policy needs to be synthesized through learning from~samples.

Hence, there is a recent focus on learning for control (i.e., motion planning) synthesis for LTL objectives~\cite{fu2014,brazdil2014,wen2015,aksaray2016,li2017,toro2018,degiacomo2019,sadigh2014,gao2019,hahn2019,li2018policy}. Most model-based reinforcement learning (RL) methods are based on detection of end components, and provide estimates of satisfaction probabilities with probably approximately correct %
bounds (e.g.,~\cite{fu2014,brazdil2014}). Such approaches, however, need to first learn and store the MDP transition probabilities, and thus have significant space~requirements, restricting their use to systems with small and low-dimensional state spaces.

On the other hand, model-free %
RL methods derive the desired policies without storing a model of the MDP. The temporal logic tasks need to be represented by a reward function, possibly with a finite-memory, so that the optimal policy maximizing the discounted future reward, also maximizes the probability of satisfying the tasks. One approach is to use temporal logic specifications that are time-bounded or defined on finite traces so that they can be directly translated to a real-valued reward function~\cite{aksaray2016,li2017,toro2018,degiacomo2019}. Alternatively, unbounded LTL formulas can be transformed into an $\omega$-automaton and the accepting condition of the automaton can be used to design the reward function. 

Such reward functions based on Rabin conditions are introduced in~\cite{sadigh2014}, as part of a model-free RL method; the approach assigns a sufficiently small negative and a positive reward to the first and second sets of the Rabin pairs, respectively. 
A generalization %
to deep Q-learning, with a new optimization algorithm, is done in~\cite{gao2019}. Yet, in the presence of rejecting end components or multiple Rabin pairs, optimal policies {obtained by this method} may not satisfy the LTL property almost surely, even if such policies exist~\cite{hahn2019}.

A given LTL property can also be translated into a limit-deterministic B\"uchi automaton (LDBA), which can be used in quantitative analysis of MDPs~\cite{hahn2015,sickert2016}. The first reward function based on LDBA accepting conditions is introduced in \cite{hasanbeig2018}. Yet, similar to~\cite{sadigh2014}, in the presence of non-accepting components, the algorithm might fail to converge to the policy that almost surely satisfies the LTL specification. 

The problem of satisfying the B\"uchi condition of an LDBA can be reduced to a reachability problem by adding transitions with a positive reward from accepting states to a terminal state~\cite{hahn2019}. 
Then, as the transition probability from an accepting state to the terminal state goes to zero, in order to reach the terminal state and obtain a positive reward, an accepting state needs to be visited infinitely often, which captures the satisfaction of the B\"uchi condition.
However, model-free RL algorithms such as Q-learning may fail to converge to the correct reachability probabilities without discounting (or improper discounting) in the presence of end components~\cite{brazdil2014}, as Q-learning might get stuck in one of the fixed-point solutions; e.g., if all the values are initialized to 1, Q-learning will not be able to decrease any value estimate. 

Consequently, in this paper, we propose a model-free RL algorithm that is \emph{guaranteed to find a control policy that maximizes the  probability of satisfying a given LTL objective (i.e., specification) in an arbitrary unknown MDP}; for the MDP,
not even which probabilities are nonzero (i.e., its graph/topology) is known. We use an automata-based approach %
that constructs a product MDP using an LDBA of a given LTL formula and assigns rewards based on the B\"uchi (repeated reachability) acceptance condition. Such optimal policy can then be derived by learning a policy maximizing the satisfaction probability of the B\"uchi condition on the product. Unlike~\cite{hahn2019}, our approach directly assigns positive rewards to the accepting states and discounts these rewards in such a way that the values of the optimal policy are \emph{proved to converge to the maximal satisfaction probabilities} as the discount factor %
goes beyond a threshold that is less than $1$.

The rest of the paper is organized as follows. We introduce preliminaries 
and formalize the problem in Sec.~\ref{sec:prelim}.
Sec.~\ref{sec:buchi} presents our model-free RL algorithm that maximizes probabilities that LTL specifications are satisfied. %
Finally, we evaluate our approach on several motion planning problems for mobile robots (Sec.~\ref{sec:case}), before concluding in Sec.~\ref{sec:conc}.

\section{Preliminaries and Problem Statement} \label{sec:prelim}
We start with preliminaries on LTL, MDPs, and RL on MDPs, before problem formulation.
We denote the sets of real and natural numbers by $\mathbb{R}$ and $\mathbb{N}$, respectively.
For a set $S$,  $S^+$ denotes the set of all finite sequences taken from~$S$. 

\subsection{Markov Decision Processes and Reinforcement Learning} \label{sec:mdp}

MDPs are common modeling formalism for systems that permit nondeterministic choices with probabilistic outcomes. 
\begin{definition} \label{def:mdp}
A (labeled) MDP is a tuple $\mathcal{M}=(S, A, P, s_0, \text{AP}, \allowbreak L)$, where $S$ is a finite set of states, $A$ is a finite set of actions, $P: S \times A \times S \to [0,1]$ is the transition probability function, $s_0 \in S$ is an initial state, $\text{AP}$ is a finite set of atomic propositions, and $L: S \to 2^{\text{AP}}$ is a labeling function. For simplicity, let $A(s)$ denote the set of actions that can be taken in state $s$; then for all states $s \in S$, it holds that $\sum_{s' \in S} P(s, a, s')=1$ if $a\in A(s)$, and $0$ otherwise.
\end{definition}

A path is an infinite sequence of states $\sigma=s_0s_1s_2\dots$, with $s_i\in S$ such that for all $i \geq 0$, there exists $a_i \in A$ with $P(s_i,a_i,s_{i+1})>0$.
We use $\sigma[i]$ to denote the state $s_i$, as well as $\sigma[{:}i]$ and $\sigma[i{+}1{:}]$ to denote the prefix $s_0s_1\dots s_{i}$ and the suffix $s_{i+1}s_{i+2}\dots$ of the path, respectively. 

\begin{definition}
A \textbf{policy} $\pi$ for an MDP $\mathcal{M}$ is a function $\pi:S^+\to A$ such that $\pi(\sigma[{:}n])\in A(\sigma[n])$. 
A policy is \textbf{memoryless} if it only depends on the current state, i.e., $\pi(\sigma[{:}n]) = \pi(\sigma[n])$ for any $\sigma$, and thus can be defined as $\pi: S \to A$.
A Markov chain (MC) of an MDP $\mathcal{M}$ induced by a memoryless policy $\pi$ is a tuple $\mathcal{M}_\pi=(S,P_\pi,s_0,\text{AP},L)$,
where $P_\pi(s, s') = P(s, \pi(s), s')$ for all $s, s' \in S$.
A \textbf{bottom strongly connected component} (BSCC) of an MC is a strongly connected component with no outgoing transitions.
\end{definition}

Let $R: S\to \mathbb{R}$ be a \emph{reward function} of the MDP $\mathcal{M}$. 
Then, for a discount factor $\gamma \in (0, 1)$, the $K$-step \emph{return} ($K \in \mathbb{N}$ or $K = \infty$) of a path $\sigma$ from time $t \in \mathbb{N}$ is 
\vspace{-8pt}
\begin{align}
    G_{t{:}K}(\sigma) = \sum_{i=0}^{K} \gamma^{i}R(\sigma[t{+}i]), ~ G_t(\sigma) = \lim_{K\to\infty} G_{t{:}K}(\sigma). \label{eq:return}
\end{align}
\vspace{-8pt}

\noindent Under a policy $\pi$, the \emph{value} of a state $s$ is defined as the expected return of a path -- i.e.,
\begin{align} 
    v_\pi(s) = \mathbb{E}_\pi\left[ G_t(\sigma) \mid \sigma[t]=s \right], \label{eq:expected return}
\end{align}
for any fixed $t \in \mathbb{N}$ such that $Pr_\pi^\mathcal{M}(\sigma[t] = s ) > 0$.

The RL objective is to find an optimal policy $\pi^*$ for MDP $\mathcal{M}$ from samples, such that the return $v_\pi(s)$ is maximized for all $s \in S$; we denote the maximum by $v_*(s)$.
Specifically, RL is \textit{model-free}, if $\pi^*$ is derived without~explicitly~estimating the transition probabilities, as done in model-based~RL; hence, it scales significantly better in large applications~\cite{strehl2006}.

\vspace{-1pt}
\subsection{LTL and Limit-Deterministic B\"uchi~Automata} \label{sec:ltl_prelim}
\vspace{-1pt}
LTL provides a high-level language to describe specifications of a system. LTL formulas can be constructed inductively as combinations of Boolean operators, negation~($\neg$) and conjunction ($\wedge$), and two temporal operators, next ($\bigcirc$) and until ($\textsf{U}$), using the following syntax:
\vspace{-2pt}
\begin{align}
    \varphi ::= \mathrm{true} \mid a \mid \varphi_1 \wedge \varphi_2 \mid \neg \varphi \mid \bigcirc \varphi \mid \varphi_1 \textsf{U} \varphi_2, ~ {a\in\text{AP}} \label{eq:ltl}.
\end{align}

The satisfaction of an LTL formula $\varphi$ for a path $\sigma$ of an MDP from Def.~\ref{def:mdp} (denoted by $\sigma \models \varphi$) is defined as follows:
$\sigma$ satisfies an atomic proposition $a$, if the first state $s_0$ of the path is labeled with $a$, %
i.e., $a \in L(s_0)$;
a path $\sigma$ satisfies $\bigcirc \varphi$ if $\sigma[1{:}]$ satisfies the formula $\varphi$; %
and finally, 
\vspace{-2pt}
\begin{align}
    \sigma \models \varphi_1 \textsf{U} \varphi_2, ~~ \text{if }\ \exists i. \sigma[i] \models \varphi_2 \text{ and } \forall j<i. \sigma[j] \models \varphi_1 \label{eq:until}.
\end{align}
Other common Boolean and temporal operators are derived as follows: 
(or) $\varphi_1 \lor \varphi_2 \equiv \neg (\neg \varphi_1 \land \neg \varphi_2)$;
(implies) $\varphi_1 \to \varphi_2 \equiv \neg \varphi_1 \lor \varphi_2$;
(eventually) $\lozenge \varphi \equiv \mathrm{true}\ \textsf{U}\ \varphi$; and (always) $\square \varphi \equiv \neg (\lozenge \neg \varphi)$~\cite{baier2008}.

Satisfaction of an LTL formula can be evaluated on a Limit-Deterministic B\"uchi~Automata (LDBA) that can be directly derived from the formula~\cite{hahn2015,sickert2016}.

\begin{definition} \label{def:ldba}
An %
LDBA is a tuple $\mathcal{A}=(Q, \Sigma, \allowbreak \delta, q_0, B)$, where $Q$ is a finite set of states, $\Sigma$ is a finite alphabet, $\delta: Q \times (\Sigma \cup \{\epsilon\}) \to 2^Q$ is a (partial) transition function, $q_0 \in Q$ is an initial state, and $B$ is a set of \emph{accepting states}, such that (i)~$\delta$ is total except for the $\epsilon$-moves, i.e., $|\delta(q, \alpha)| = 1$ for all $q \in Q, \alpha \in \Sigma$; and (ii)~there exists a bipartition of $Q$ to an initial and an accepting component, i.e., $Q_I \cup Q_A = Q$,~where
\begin{itemize}
    \item the $\epsilon$-moves are not allowed in the accepting component, i.e., 
    for any $q \in Q_A$, $\delta(q, \epsilon) = \emptyset$, %
    \item outgoing transitions from the accepting component stay within it, i.e., for any $q \in Q_A, \nu \in \Sigma$, $\delta(q, \nu) \subseteq Q_A$,
    \item the accepting states are in the accepting component, i.e., $B \subseteq Q_A$. 
\end{itemize}
An infinite path $\sigma$ is accepted by the LDBA if it satisfies the \textbf{B\"uchi condition} -- i.e., $\text{inf}(\sigma) \cap B \neq \emptyset$, where $\text{inf}(\sigma)$ denotes the set of states visited by $\sigma$ infinitely many times.
\end{definition}

\subsection{Problem Statement} \label{sec:problem}
\vspace{-2pt}
In this work, we consider the problem of synthesizing a robot control policy in a stochastic environment such that \textbf{\textit{the probability of satisfying the desired specification is maximized}}.
The robot environment is modeled as an MDP with \emph{unknown transition probabilities} (i.e., not even which probabilities are nonzero is known), and the desired objective (i.e., specification) is given by an LTL formula. Our goal is to obtain such a policy by \emph{learning the maximal probabilities that the LTL specification is satisfied}; this should be achieved by directly interacting with the environment -- i.e.,  \emph{without constructing a model of the~MDP}.

For any policy $\pi$, $Pr_\pi(s \models \varphi)$ denotes the probability~of all paths from the state $s$ to satisfy formula $\varphi$ under~the~policy %
\begin{align}
    Pr_\pi^\mathcal{M}(s \models \varphi) := Pr_\pi^\mathcal{M}\left\{\sigma \mid \sigma[0]=s \text{ and } \sigma \models \varphi \right\}.
\end{align}
We omit the superscript $\mathcal{M}$ when it is clear from the context. We now formally state the problem considered in this work.
\begin{problem} \label{problem}
Given an MDP $\mathcal{M}=(S,A,P,s_0,\allowbreak \text{AP},L)$ where $P$ is fully
unknown and an LTL specification $\varphi$, design a model-free RL algorithm that finds a finite-memory objective policy $\pi^\varphi$ that satisfies
\begin{align}
    Pr_{\pi^\varphi}\left(s \models \varphi \right) = Pr_\text{max}(s \models \varphi),\label{eq:theorem1_pi_max}
\end{align}
where $Pr_\text{max}(s \models \varphi) := \max_{\pi} Pr_\pi(s \models \varphi)$ for all $s\in S$.
\end{problem}

\section{RL-Based Synthesis from LTL Specifications} \label{sec:buchi}

\begin{figure*}
\vspace{-2pt}
\centering
\begin{subfigure}[b]{0.3\textwidth}
    \begin{subfigure}[b]{\textwidth}
    \vspace{4pt}
        \centering
        \scalebox{.42}{\begin{tikzpicture}[>=latex',line join=bevel,]
  \pgfsetlinewidth{1bp}
\pgfsetcolor{black}
  \draw [->] (9.9836bp,85.29bp) .. controls (8.0505bp,95.389bp) and (10.723bp,105.0bp)  .. (18.0bp,105.0bp) .. controls (22.662bp,105.0bp) and (25.434bp,101.06bp)  .. (26.016bp,85.29bp);
  \definecolor{strokecol}{rgb}{0.0,0.0,0.0};
  \pgfsetstrokecolor{strokecol}
  \draw (18.0bp,112.5bp) node {\LARGE $1$};
  \draw [->] (34.285bp,77.143bp) .. controls (48.148bp,84.074bp) and (68.398bp,94.199bp)  .. (94.31bp,107.16bp);
  \draw (64.0bp,102.5bp) node {\LARGE $\epsilon$};
  \draw [->] (34.285bp,61.027bp) .. controls (48.058bp,54.284bp) and (68.137bp,44.454bp)  .. (93.967bp,31.808bp);
  \draw (64.0bp,58.5bp) node {\LARGE $\epsilon$};
  \draw [->] (104.95bp,137.17bp) .. controls (103.56bp,147.6bp) and (106.58bp,157.0bp)  .. (114.0bp,157.0bp) .. controls (118.87bp,157.0bp) and (121.84bp,152.95bp)  .. (123.05bp,137.17bp);
  \draw (114.0bp,164.5bp) node {\LARGE $a$};
  \draw [->] (134.05bp,107.75bp) .. controls (150.59bp,100.11bp) and (174.16bp,89.235bp)  .. (201.43bp,76.646bp);
  \draw (168.0bp,104.5bp) node {\LARGE $\neg a$};
  \draw [->] (104.95bp,42.17bp) .. controls (103.56bp,52.599bp) and (106.58bp,62.0bp)  .. (114.0bp,62.0bp) .. controls (118.87bp,62.0bp) and (121.84bp,57.951bp)  .. (123.05bp,42.17bp);
  \draw (114.0bp,69.5bp) node {\LARGE $b$};
  \draw [->] (134.05bp,31.062bp) .. controls (150.59bp,38.534bp) and (174.16bp,49.187bp)  .. (201.43bp,61.514bp);
  \draw (168.0bp,59.5bp) node {\LARGE $\neg b$};
  \draw [->] (209.37bp,84.916bp) .. controls (207.11bp,95.15bp) and (209.99bp,105.0bp)  .. (218.0bp,105.0bp) .. controls (223.26bp,105.0bp) and (226.3bp,100.76bp)  .. (226.63bp,84.916bp);
  \draw (218.0bp,112.5bp) node {\LARGE $1$};
\begin{scope}
  \definecolor{strokecol}{rgb}{0.0,0.0,0.0};
  \pgfsetstrokecolor{strokecol}
  \draw (18.0bp,69.0bp) ellipse (18.0bp and 18.0bp);
  \draw (18.0bp,69.0bp) node {\LARGE $q_0$};
\end{scope}
\begin{scope}
  \definecolor{strokecol}{rgb}{0.0,0.0,0.0};
  \pgfsetstrokecolor{strokecol}
  \draw (114.0bp,117.0bp) ellipse (18.0bp and 18.0bp);
  \draw (114.0bp,117.0bp) ellipse (22.0bp and 22.0bp);
  \draw (114.0bp,117.0bp) node {\LARGE $q_1$};
\end{scope}
\begin{scope}
  \definecolor{strokecol}{rgb}{0.0,0.0,0.0};
  \pgfsetstrokecolor{strokecol}
  \draw (114.0bp,22.0bp) ellipse (18.0bp and 18.0bp);
  \draw (114.0bp,22.0bp) ellipse (22.0bp and 22.0bp);
  \draw (114.0bp,22.0bp) node {\LARGE $q_2$};
\end{scope}
\begin{scope}
  \definecolor{strokecol}{rgb}{0.0,0.0,0.0};
  \pgfsetstrokecolor{strokecol}
  \draw (218.0bp,69.0bp) ellipse (18.0bp and 18.0bp);
  \draw (218.0bp,69.0bp) node {\LARGE $q_3$};
\end{scope}
\end{tikzpicture}}
        \vspace{-4pt}
        \caption{A derived LDBA $\mathcal{A}$ for the LTL formula $\varphi=\lozenge \square a \vee \lozenge \square b$}
        \label{fig:ldba}
        \vspace{0.92em}
    \end{subfigure}
    \begin{subfigure}[b]{\textwidth}
        \centering
        \scalebox{.48}{\begin{tikzpicture}[>=latex',line join=bevel,]
  \pgfsetlinewidth{1bp}
\pgfsetcolor{black}
  \draw [->] (34.0bp,52.832bp) .. controls (40.074bp,55.718bp) and (47.176bp,58.552bp)  .. (54.0bp,60.0bp) .. controls (67.478bp,62.86bp) and (71.378bp,62.065bp)  .. (85.0bp,60.0bp) .. controls (87.645bp,59.599bp) and (90.381bp,59.003bp)  .. (102.89bp,55.309bp);
  \draw [->] (35.163bp,38.1bp) .. controls (51.42bp,32.512bp) and (75.861bp,24.11bp)  .. (102.97bp,14.793bp);
  \draw [->] (102.93bp,46.215bp) .. controls (97.618bp,44.17bp) and (91.093bp,42.021bp)  .. (85.0bp,41.0bp) .. controls (72.299bp,38.872bp) and (58.067bp,39.142bp)  .. (36.04bp,41.154bp);
  \definecolor{strokecol}{rgb}{0.0,0.0,0.0};
  \pgfsetstrokecolor{strokecol}
  \draw (69.5bp,48.5bp) node {\LARGE $0.9$};
  \draw [->] (125.01bp,48.706bp) .. controls (138.89bp,45.814bp) and (163.24bp,40.742bp)  .. (192.29bp,34.689bp);
  \draw (158.5bp,51.5bp) node {\LARGE $0.1$};
  \draw [->] (125.17bp,11.853bp) .. controls (137.31bp,12.927bp) and (157.21bp,15.119bp)  .. (174.0bp,19.0bp) .. controls (177.05bp,19.704bp) and (180.2bp,20.56bp)  .. (192.98bp,24.591bp);
  \draw (158.5bp,26.5bp) node {\LARGE $1.0$};
  \draw [->] (228.31bp,31.607bp) .. controls (233.98bp,31.769bp) and (240.25bp,31.92bp)  .. (246.0bp,32.0bp) .. controls (259.78bp,32.191bp) and (263.22bp,32.237bp)  .. (277.0bp,32.0bp) .. controls (279.51bp,31.957bp) and (282.14bp,31.893bp)  .. (294.99bp,31.469bp);
  \draw [->] (294.85bp,22.049bp) .. controls (289.73bp,18.493bp) and (283.38bp,14.82bp)  .. (277.0bp,13.0bp) .. controls (263.75bp,9.2223bp) and (259.41bp,9.8345bp)  .. (246.0bp,13.0bp) .. controls (242.24bp,13.887bp) and (238.43bp,15.23bp)  .. (225.52bp,21.264bp);
  \draw (261.5bp,20.5bp) node {\LARGE $1.0$};
\begin{scope}
  \definecolor{strokecol}{rgb}{0.0,0.0,0.0};
  \pgfsetstrokecolor{strokecol}
  \draw (18.0bp,44.0bp) ellipse (18.0bp and 18.0bp);
  \draw (18.0bp,44.0bp) node {\begin{tabular}{c} \LARGE$s_0$ \\ \Large\{{\LARGE$a$}\}  \end{tabular}};
\end{scope}
\begin{scope}
  \definecolor{strokecol}{rgb}{0.0,0.0,0.0};
  \pgfsetstrokecolor{strokecol}
  \draw (125.0bp,62.0bp) -- (103.0bp,62.0bp) -- (103.0bp,40.0bp) -- (125.0bp,40.0bp) -- cycle;
  \draw (114.0bp,51.0bp) node {\LARGE $\alpha$};
\end{scope}
\begin{scope}
  \definecolor{strokecol}{rgb}{0.0,0.0,0.0};
  \pgfsetstrokecolor{strokecol}
  \draw (125.0bp,22.0bp) -- (103.0bp,22.0bp) -- (103.0bp,0.0bp) -- (125.0bp,0.0bp) -- cycle;
  \draw (114.0bp,11.0bp) node {\LARGE $\beta$};
\end{scope}
\begin{scope}
  \definecolor{strokecol}{rgb}{0.0,0.0,0.0};
  \pgfsetstrokecolor{strokecol}
  \draw (210.0bp,31.0bp) ellipse (18.0bp and 18.0bp);
  \draw (210.0bp,31.0bp) node {\begin{tabular}{c} \LARGE$s_1$ \\ \Large\{{\LARGE$b$}\}   \end{tabular}};
\end{scope}
\begin{scope}
  \definecolor{strokecol}{rgb}{0.0,0.0,0.0};
  \pgfsetstrokecolor{strokecol}
  \draw (317.0bp,42.0bp) -- (295.0bp,42.0bp) -- (295.0bp,20.0bp) -- (317.0bp,20.0bp) -- cycle;
  \draw (306.0bp,31.0bp) node {\LARGE $\theta$};
\end{scope}
\end{tikzpicture}}
        \caption{An example MDP $\mathcal{M}$; the circles denote MDP states, rectangles denote  actions, and numbers transition probabilities}
        \label{fig:mdp}
    \end{subfigure}
\end{subfigure}
\begin{subfigure}[b]{0.69\textwidth}
    \centering
    \scalebox{.44}{\input{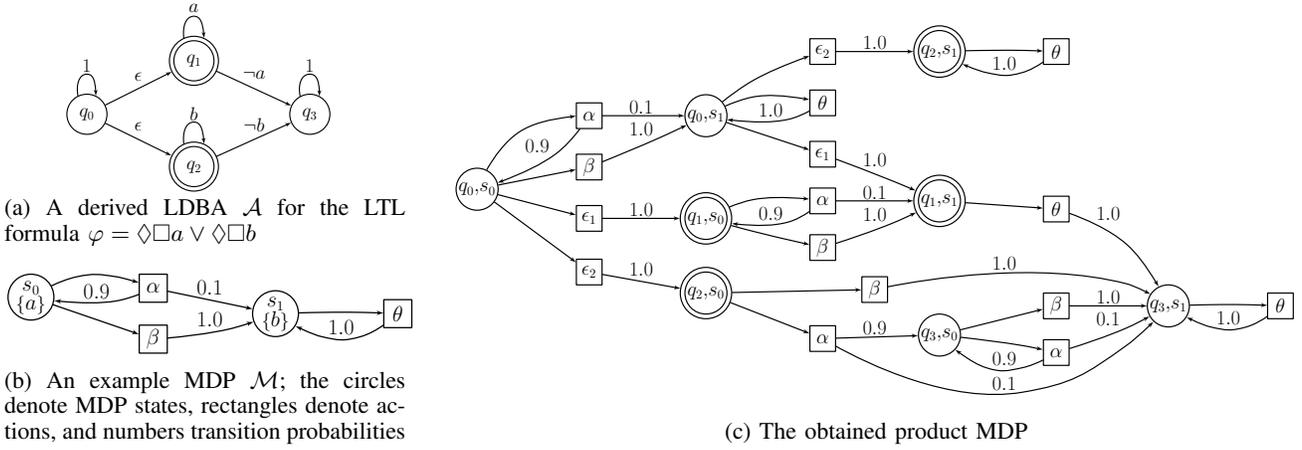}}
    \caption{The obtained product MDP}
    \label{fig:product_mdp}
\end{subfigure}
\caption{Product MDP $\mathcal{M}^\times$ obtained from an MDP $\mathcal{M}$ and an LDBA $\mathcal{A}$ that is automatically derived from an LTL formula~$\varphi$.}
 \vspace{-9pt}
\label{fig:product_mdp_construction}
\end{figure*}

In this section, we introduce a framework to solve Problem~\ref{problem}. %
We start by exploiting the fact that any LTL formula can be transformed into an LDBA that can be used in quantitative analysis of MDPs~\cite{hahn2015,sickert2016}; in such LDBAs, the only nondeterministic transitions are the $\epsilon$-moves from the initial component to the accepting component (e.g., see Fig.~\ref{fig:ldba}). Therefore, we reduce the problem of satisfying a given LTL objective $\varphi$ in an MDP $\mathcal{M}$ to the problem satisfying a repeated reachability (B\"uchi) objective $\varphi_B = \square \lozenge B$ in the product MDP, computed from the MDP $\mathcal{M}$ and the obtained LBDA.  
We then exploit a new discounting and rewarding mechanism that enables the use of model-free reinforcement learning, to find an objective~policy with strong performance guarantees (i.e., probability maximization). 
{Specifically, we use Q-learning \cite{sutton2018} in this work, but other reinforcement learning methods can be applied similarly.}
Our overall approach is captured in Algorithm~\ref{alg:alg1}, and we now describe each step~in~detail.

\subsection{Design of Product MDP} \label{sub:product MDP}

Given an LTL formula $\varphi$ with atomic propositions $2^\text{AP}$, the product MDP is constructed by composing $\mathcal{M}$ with an LDBA $\mathcal{A}_\varphi$ with the alphabet $2^\text{AP}$,  which can be automatically derived from $\varphi$~\cite{hahn2015,sickert2016}.
LDBAs, similarly to deterministic Rabin automata~\cite{baier2008}, 
can be used in quantitative analysis of MDPs if they are constructed in a certain way~\cite{hahn2019}.
\begin{definition} \label{def:product MDP}
A product MDP $\mathcal{M}^\times=(S^\times,A^\times,P^\times,s_0^\times,\allowbreak B^\times)$ of an MDP $\mathcal{M}=(S,A,P,s_0,\allowbreak \text{AP}, L)$ and an LDBA $\mathcal{A}=(Q, 2^\text{AP}, \delta, q_0, B)$  is defined as: $S^\times=S\times Q$ is the set of states, $A^\times=A\cup A^\epsilon,\ A^\epsilon{:=}\{\epsilon_q|q\in Q\}$ 
is the action~set, $P^\times:S^\times\times A^\times \times S^\times \to [0,1]$ is the transition function 
\begin{align}
    &P^\times(\langle s,q \rangle,a,\langle s',q'\rangle) \notag \\ 
    &\hspace{-1pt} = \begin{cases}
        P(s,a,s') & q' = \delta(q,L(s)) \text{ and } a \notin A^\epsilon \\
        1 & a = \epsilon_{q'} \text{ and } q' \in \delta(q,\epsilon) \text{ and } s=s' \\
        0 & \text{otherwise}
    \end{cases}\hspace{-1pt}, \label{eq:prod_prob}
\end{align}
$s_0^\times$ is $\langle s_0,q_0 \rangle$, and $B^\times = \{\langle s,q \rangle \in S^\times \mid q \in B\}$ is the set of accepting states.
We say that a path $\sigma$ of the product MDP $\mathcal{M}^\times$ satisfies the B\"uchi condition $\varphi_B$ if $inf(\sigma) \cap B^\times \neq \emptyset$.
\end{definition}

\begin{algorithm}[!t]
\begin{algorithmic}
\STATE \textbf{Input:} LTL formula $\varphi$, MDP $\mathcal{M}$
\STATE Translate $\varphi$ to an LDBA $\mathcal{A}_\varphi$
\STATE Construct the product $\mathcal{M}^\times$ of $\mathcal{M}$ and $\mathcal{A}_\varphi$ %
\STATE Initialize $Q(\langle s,q \rangle, a)$ on $\mathcal{M}^\times$ %

\FOR {$t = 0,1,\dots, T$}
\STATE Derive a policy $\pi$ from $Q$
\STATE Take the action $a_t \leftarrow \pi(\langle s,q \rangle_t)$
\STATE Observe the next state $\langle s,q \rangle_{t+1}$
\STATE $Q(\langle s,q \rangle_t,a_t) \leftarrow (1-\alpha)\cdot Q(\langle s,q \rangle_t,a_t) + \alpha \cdot R_B(\langle s,q \rangle_t)$
\STATE \hspace{2em} $+ \alpha \cdot \Gamma_B(\langle s,q \rangle_t) \cdot \max_{a'} Q(\langle s,q \rangle_{t+1},a')$
\ENDFOR
\STATE Get a greedy policy $\pi^\varphi_B$ from Q
\RETURN $\pi^\varphi_B$ and $\mathcal{A}_\varphi$
\end{algorithmic}
\caption{Model-free RL-based synthesis on MDPs that maximizes the satisfaction probability of  LTL specifications.}\label{alg:alg1}
\end{algorithm}

The nondeterministic $\epsilon$-moves in the LDBA are represented by $\epsilon$-actions in the product MDP. When an $\epsilon$-action is taken, only the state of the LDBA is updated according to the corresponding $\epsilon$-move. When an MDP action is taken, the next MDP state will be determined by the transition probabilities and the LDBA makes a transition by consuming the label of the current MDP state.  Intuitively, an $\epsilon$-action can be considered as guessing the possible paths generated in the future. If, as part of iterative learning, the guess is wrong the agent cannot change its guess; however, in the next RL episode, the agent can make the correct one.

An example product MDP is illustrated in Fig.~\ref{fig:product_mdp_construction}. 
In the MDP (Fig.~\ref{fig:mdp}), 
states $s_0$ and $s_1$ are labeled by %
atomic~propositions %
${a}$ and ${b}$, respectively.
In the LDBA (Fig.~\ref{fig:ldba}), 
for simplicity, the transitions %
are labeled by Boolean formulas of the atomic propositions of $a$ and $b$ or an $\epsilon$ label, with $1$ standing for ``true''; this is equivalent to labeling the transitions using sets of atomic propositions, as in Def.~\ref{def:ldba}.~A~transition labeled by a Boolean formula is triggered upon receiving a set of atomic propositions satisfying that formula, and~a~transition labeled by an $\epsilon$ label can be (but does not have to be) triggered automatically. 
The product MDP is shown in Fig.~\ref{fig:product_mdp}. 

To distinguish the two $\epsilon$ transitions from $q_0$ to $q_1$ and from $q_0$ to $q_2$ in~Fig.~\ref{fig:ldba}, we denote them by $\epsilon_1$ and $\epsilon_2$ in Fig.~\ref{fig:product_mdp}, respectively. %
Notice that choosing~$\epsilon_2$ before choosing $\beta$ does not satisfy the B\"uchi condition although the generated paths by this policy satisfy the LTL formula. Yet, this does not constitute a problem because in such cases, there always exists a corresponding policy that generates the same paths and satisfies the B\"uchi condition (e.g. choosing $\epsilon_2$ after $\beta$).

Now, the satisfaction of the LTL objective $\varphi$ on the original MDP $\mathcal{M}$ is related to the satisfaction of the B\"uchi objective $\varphi_B$ on the product MDP $\mathcal{M}^\times$, as formalized below.

\begin{lemma}
\label{lem:product MDP}
    A memoryless policy $\pi^\varphi_B$ that maximizes the satisfaction probability of $\varphi_B$ on $\mathcal{M}^\times$ induces a finite-memory policy $\pi^\varphi$ that maximizes the satisfaction probability of $\varphi$ on $\mathcal{M}$ in Problem~\ref{problem}. 
    \vspace{-2pt}
\end{lemma}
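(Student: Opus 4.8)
The plan is to establish the equivalence in two halves: a \emph{path-level} correspondence between $\mathcal{M}$ and $\mathcal{M}^\times$ that lets one transfer policies and satisfaction probabilities back and forth, and an appeal to the \emph{quantitative adequacy} of the LDBA construction of \cite{hahn2015,sickert2016,hahn2019} to pin down the maxima. First I would record that, by Def.~\ref{def:product MDP}, every path of $\mathcal{M}^\times$ projects onto a path $\sigma$ of $\mathcal{M}$ together with a run of $\mathcal{A}_\varphi$ on the label sequence $L(\sigma[0])L(\sigma[1])\dots$; on non-$\epsilon$ transitions this run is forced, since $\delta$ is deterministic on $\Sigma$, so the only branching in the $Q$-component comes from the $\epsilon$-actions, which are chosen by the policy and leave the $S$-component unchanged. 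Consequently the current automaton state $q$ is a finite (since $Q$ is finite) summary of the history that the policy may use as memory.

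Next I would construct the induced policy $\pi^\varphi$ from the memoryless $\pi^\varphi_B$: running $\pi^\varphi$ on $\mathcal{M}$ means tracking $\langle s,q\rangle$ internally, applying $\pi^\varphi_B(\langle s,q\rangle)$ whenever it prescribes an MDP action and silently performing the prescribed $\epsilon$-moves (which do not alter $s$) otherwise. Because $\epsilon$-actions are deterministic and do not touch the $S$-component, $\pi^\varphi$ is a well-defined finite-memory policy $\pi^\varphi:S^+\to A$ with memory set $Q$, and the probability measure it induces on paths of $\mathcal{M}$ is exactly the push-forward, under the projection $\langle s,q\rangle\mapsto s$, of the measure that $\pi^\varphi_B$ induces on paths of $\mathcal{M}^\times$. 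By the defining property of the LDBA, $\sigma\models\varphi$ iff the resolved run of $\mathcal{A}_\varphi$ on $L(\sigma)$ visits $B$ infinitely often, i.e.\ iff the corresponding product path satisfies $\varphi_B$; hence $Pr_{\pi^\varphi}^{\mathcal{M}}(s\models\varphi)=Pr_{\pi^\varphi_B}^{\mathcal{M}^\times}(\langle s,q_0\rangle\models\varphi_B)$.

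It then remains to show this common value equals $Pr_\text{max}(s\models\varphi)$. For this I would invoke that $\mathcal{A}_\varphi$ is built to be sound and complete for quantitative analysis of MDPs \cite{hahn2015,sickert2016,hahn2019}, which guarantees $Pr_\text{max}^{\mathcal{M}^\times}(\langle s,q_0\rangle\models\varphi_B)=Pr_\text{max}^{\mathcal{M}}(s\models\varphi)$ and, moreover, that the B\"uchi objective on the finite MDP $\mathcal{M}^\times$ admits an optimal \emph{memoryless} policy. Since $\pi^\varphi_B$ is assumed to attain $Pr_\text{max}^{\mathcal{M}^\times}(\varphi_B)$, combining this with the previous equality yields $Pr_{\pi^\varphi}^{\mathcal{M}}(s\models\varphi)=Pr_\text{max}(s\models\varphi)$ for all $s$, which is exactly \eqref{eq:theorem1_pi_max}.

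I expect the $\epsilon$-moves to be the crux. The subtlety flagged after Def.~\ref{def:product MDP} --- that a policy may generate $\varphi$-satisfying paths while the product path misses $B$ infinitely often because of badly timed $\epsilon$-guesses --- means the path correspondence is not a clean bijection, so the direction asserting that every $\varphi$-satisfying strategy on $\mathcal{M}$ can be matched by a B\"uchi-satisfying strategy on $\mathcal{M}^\times$ is the delicate one. The remedy is the observation already made in the text that one may always defer the $\epsilon$-transition until the label guaranteeing entry to the accepting component actually occurs, yielding an equivalent strategy whose product path does satisfy $\varphi_B$; formalizing exactly this deferral, and checking that it preserves the induced path measure, is the step I would treat most carefully rather than folding entirely into the cited adequacy result.
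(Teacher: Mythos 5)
Your proposal is, at bottom, the same argument as the paper's: the paper's entire proof is a one-line appeal to Theorem~3 of \cite{sickert2016}, and your pivotal step --- that $Pr_\text{max}^{\mathcal{M}^\times}(\langle s,q_0\rangle \models \varphi_B) = Pr_\text{max}^{\mathcal{M}}(s\models\varphi)$ with memoryless policies sufficing on the product --- is exactly the content of that cited theorem. What you add is an explicit unpacking of how $\pi^\varphi_B$ induces the finite-memory policy $\pi^\varphi$ (memory set $Q$, push-forward of the path measure under the projection $\langle s,q\rangle \mapsto s$), which the paper leaves implicit in the lemma's proof and only sketches informally in the paragraph following it.

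One local error to repair: your intermediate claim that $Pr_{\pi^\varphi}^{\mathcal{M}}(s\models\varphi)=Pr_{\pi^\varphi_B}^{\mathcal{M}^\times}(\langle s,q_0\rangle\models\varphi_B)$, justified by ``$\sigma\models\varphi$ iff the \emph{resolved} run of $\mathcal{A}_\varphi$ visits $B$ infinitely often,'' is false as stated. Because the LDBA is nondeterministic (via $\epsilon$-moves), $\sigma\models\varphi$ iff \emph{some} accepting run exists, not necessarily the one resolved by the policy; a product path can project to a $\varphi$-satisfying path while its resolved run misses $B$ --- precisely the bad-guess phenomenon you yourself flag in your last paragraph, which contradicts the asserted equality. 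For an arbitrary policy one only gets $Pr_{\pi^\varphi}^{\mathcal{M}}(s\models\varphi)\geq Pr_{\pi^\varphi_B}^{\mathcal{M}^\times}(\langle s,q_0\rangle\models\varphi_B)$. Fortunately this inequality is all the lemma needs: combined with the cited adequacy result and the assumed optimality of $\pi^\varphi_B$, it gives $Pr_{\pi^\varphi}(s\models\varphi)\geq Pr_\text{max}^{\mathcal{M}^\times}(\langle s,q_0\rangle\models\varphi_B)=Pr_\text{max}(s\models\varphi)$, and the reverse inequality is trivial, so $\pi^\varphi$ is maximizing; the path-level equality then follows as a consequence rather than serving as a premise. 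With that reordering your outline is sound and coincides with the paper's (delegated) proof, including your correct identification of the $\epsilon$-deferral argument as the delicate completeness direction hidden inside the citation.
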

\begin{proof} 
Follows from the proof of Theorem~3 in~\cite{sickert2016}.
\end{proof}
\vspace{-2pt}

Therefore, the behavior of the induced policy $\pi^\varphi$ can be described by the policy $\pi^\varphi_B$ and the LDBA $\mathcal{A}_\varphi$ derived directly from the LTL formula $\varphi$. Initially, $\mathcal{A}_\varphi$  is reset to its start state $q_0$ and whenever the MDP $\mathcal{M}$ makes a transition from $s$ to $s'$, $\mathcal{A}_\varphi$  updates its current state from $q$ to $\delta(q,L(s))$. The action to be selected in an MDP state $s$ when $\mathcal{A}_\varphi$ is in a state $q$ is determined by  $\pi^\varphi_B$ as follows: if $\pi^\varphi_B(\langle s,q \rangle)$ is an $\epsilon$-action $\epsilon_{q'}$, $\mathcal{A}_\varphi$ changes its state to $q'$ and the action $\pi^\varphi_B(\langle s,q' \rangle)$ is selected; otherwise, $\pi^\varphi_B(\langle s,q \rangle)$ is selected.

\subsection{Learning for B\"uchi Conditions with Discounted Rewards} 

Our goal is to learn a policy that maximizes the probability of satisfying a given B\"uchi objective. 
By Lemma~\ref{lem:product MDP}, in what follows, we assume policies are memoryless since they are sufficient for B\"uchi objectives.
For simplicity, we omit the superscript $ ^\times$ and write $\mathcal{M}=(S,A,P,s_0,\allowbreak B)$ and $s\in S$ instead of $\mathcal{M}^\times=(S^\times,A^\times,P^\times,s_0^\times,\allowbreak B^\times)$ and $\langle s,q \rangle \in S^\times$.

We propose a model-free learning method that 
uses carefully crafted rewards and state-dependent discounting based on the B\"uchi condition such that an optimal policy $\pi^*$ maximizing the expected return is also an objective policy $\pi^\varphi_B$  maximizing the satisfaction probabilities.
Specifically, 
we define the return of a path as a function of these rewards and discount factors in such a way that the value of a state, the expected return from that state, approaches the probability of satisfying the objective as the discount factor $\gamma$ goes to~1. %

\begin{theorem} \label{theorem:buchi}
For a given MDP $\mathcal{M}$ with $B \subseteq S$, 
{the value function $v_\pi^\gamma$ for the policy~$\pi$ and the discount factor $\gamma$ satisfies}
\vspace{-10pt}
\begin{align} \label{eq:thm1}
    \lim_{\gamma \to 1^{{-}}} v_\pi^\gamma(s) = Pr_\pi(s \models \square \lozenge B)
\end{align} 
for all states $s \in S$, if the return of a path is defined~as
\begin{align}
    G_t(\sigma) &:= \sum\nolimits_{i=0}^{\infty} R_B(\sigma[t{+}i])\cdot \prod\nolimits_{j=0}^{i-1} \Gamma_B(\sigma[t{+}j]) \label{eq:return_updated}
\end{align} 
where $\prod_{j=0}^{-1} := 1$, $R_B:S\to[0,1)$ and $\Gamma_B:S\to(0,1)$ are the reward and the discount functions defined as:
\begin{align}
    \hspace{-8pt}R_B(s) := \begin{cases}
        1-\gamma_B &  \hspace{-2pt}s \in B \\
        0 & \hspace{-2pt}s \notin B
    \end{cases}, \
    \Gamma_B(s) := \begin{cases}
        \gamma_B & \hspace{-2pt}s \in B \\
        \gamma & \hspace{-2pt}s \notin B
    \end{cases} \label{eq:reward_discount}
\end{align}
Here, we set $\gamma_B = \gamma_B (\gamma)$ as a function of $\gamma$ such that
\begin{align} \label{eq:gamma_lim}
    \lim_{\gamma \to 1^-} \frac{1 - \gamma}{1 - \gamma_B (\gamma)} = 0. 
\end{align}
\end{theorem}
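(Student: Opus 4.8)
The plan is to fix the (memoryless) policy $\pi$ and work entirely within the induced Markov chain $\mathcal{M}_\pi$, reducing the claim to a statement about this chain. First I would unfold \eqref{eq:return_updated} by one step to obtain the Bellman recursion $v_\pi^\gamma(s) = R_B(s) + \Gamma_B(s)\sum_{s'}P_\pi(s,s')\,v_\pi^\gamma(s')$, and observe that the total reward along any path is at most $\sum_{i\ge 0}(1-\gamma_B)\gamma_B^{i} = 1$, so $0 \le v_\pi^\gamma(s)\le 1$ uniformly in $\gamma$. I would then decompose the state space of $\mathcal{M}_\pi$ into its transient states and its BSCCs, calling a BSCC \emph{accepting} if it intersects $B$. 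Since a finite Markov chain almost surely enters some BSCC and visits each of its states infinitely often, $\square\lozenge B$ holds iff the path is absorbed into an accepting BSCC; hence $Pr_\pi(s\models\square\lozenge B)$ equals the probability of reaching an accepting BSCC from $s$, and the theorem reduces to showing that $v_\pi^\gamma$ converges to this reachability probability.

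The heart of the argument is to show $v_\pi^\gamma(s)\to 1$ for every $s$ in an accepting BSCC $C$. Fixing a reference state $b\in B\cap C$ and applying the strong Markov property at successive returns to $b$, I would write $v_\pi^\gamma(b)=\Phi/(1-\Lambda)$, where $\Lambda=\mathbb{E}_b[\prod_{j=0}^{\rho-1}\Gamma_B(\sigma[j])]$ is the expected discount accrued over one excursion from $b$ back to $b$ (with $\rho$ the first return time) and $\Phi$ is the expected discounted reward collected during that excursion. The key observation is that if one artificially sets $\gamma=1$ (no discounting at non-$B$ states), these quantities satisfy $\Phi_0+\Lambda_0=1$ exactly, so the undiscounted value is precisely $1$. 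Reinstating $\gamma<1$, I would bound the deficit $1-\Lambda-\Phi$ using the Bernoulli inequality $1-\gamma^{U}\le(1-\gamma)U$ applied to the number $U$ of non-$B$ steps per excursion, obtaining $1-\Lambda-\Phi\le (1-\gamma)\,C$ for a constant $C$ built from $\mathbb{E}_b[\rho]$ and $\mathbb{E}_b[\rho^2]$, which are finite because a finite irreducible chain is positive recurrent with geometrically decaying return-time tails. Combined with the lower bound $1-\Lambda\ge 1-\gamma_B$, this yields $1-v_\pi^\gamma(b)\le C\,(1-\gamma)/(1-\gamma_B)$, which vanishes precisely by hypothesis \eqref{eq:gamma_lim}; propagating from $b$ to an arbitrary $s\in C$ through the almost surely finite hitting time of $b$ then gives $v_\pi^\gamma(s)\to 1$. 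I expect this step — quantifying how the slow discounting at $B$ dominates the fast discounting at non-$B$ states — to be the main obstacle, since it is where \eqref{eq:gamma_lim} is essential and where the moment bounds must be controlled uniformly in $\gamma$.

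It remains to assemble the transient contribution. For $s$ in a non-accepting BSCC, $R_B\equiv 0$ on the component and the path cannot leave it, so $v_\pi^\gamma(s)=0$. For a transient $s$, I would split the return into the reward earned before absorption into a BSCC and the discounted value at the entry state. The pre-absorption reward is at most $(1-\gamma_B)$ times the expected number of visits to $B$ before absorption, which is finite and independent of $\gamma$, so this term is $O(1-\gamma_B)\to 0$. The discount accumulated until absorption is $\prod_{j}\Gamma_B(\sigma[j])\ge\gamma_B^{\tau}$ with $\tau$ the almost surely finite absorption time, so by dominated convergence its expectation tends to $1$ as $\gamma\to 1^-$. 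Therefore the discounted entry value converges to the value inside the entered BSCC — namely $1$ for accepting and $0$ for non-accepting ones — so $v_\pi^\gamma(s)$ converges to the probability of reaching an accepting BSCC, i.e.\ to $Pr_\pi(s\models\square\lozenge B)$, establishing \eqref{eq:thm1}.
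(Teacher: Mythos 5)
Your proposal is correct, and it follows the same overall architecture as the paper's proof: reduce $\square\lozenge B$ to reachability of accepting BSCCs, prove as the key lemma that values of states in accepting BSCCs tend to $1$, and then show the transient and rejecting contributions converge to the reachability probability. Where you differ is in the execution of both halves. For the key lemma, the paper (its Lemma~3) uses the pathwise bounds $\gamma G_{t+1}(\sigma)\le G_t(\sigma)\le 1-\gamma_B+\gamma_B G_{t+1}(\sigma)$ to derive a one-step recursive \emph{inequality} $v_\pi^\gamma(s)\ge 1-\gamma_B+\gamma_B\gamma^n v_\pi^\gamma(s)$ at the first return time (via the Markov property and Jensen's inequality), and then solves it; you instead set up the exact renewal identity $v_\pi^\gamma(b)=\Phi/(1-\Lambda)$ over excursions from a reference accepting state, note that $\Phi+\Lambda=1$ when $\gamma=1$, and control the deficit by Bernoulli's inequality, yielding the explicit rate $1-v_\pi^\gamma(b)=O\big(\mathbb{E}_b[\rho]\,(1-\gamma)/(1-\gamma_B)\big)$ --- a quantitative refinement of the paper's bound, though both arguments hinge on the identical mechanism (first-return decomposition plus condition~\eqref{eq:gamma_lim}); incidentally your constant needs only $\mathbb{E}_b[\rho]$, not $\mathbb{E}_b[\rho^2]$. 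For the assembly, the paper conditions on $\sigma\models\square\lozenge B$ versus $\sigma\not\models\square\lozenge B$ and bounds the conditional returns by $\gamma^m v^\gamma_{\pi,\min}(B_\pi)$ and $1-\gamma_B^{m'}$ using Jensen on hitting times, whereas you split at the BSCC absorption time and use expected transient visit counts plus dominated convergence; these are equivalent in substance, with yours making the vanishing of the pre-absorption reward ($O(1-\gamma_B)$ per expected $B$-visit) slightly more explicit. One shared caveat: both your argument and the paper's implicitly require $\gamma_B(\gamma)\to 1^-$ (you need it for $O(1-\gamma_B)\to 0$ and for $\mathbb{E}[\gamma_B^\tau]\to 1$; the paper needs it for $1-\gamma_B^{m'}\to 0$), which is not literally forced by~\eqref{eq:gamma_lim} alone (a constant $\gamma_B<1$ satisfies it) but is clearly the intended reading of the theorem, so this is not a gap relative to the paper.
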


Before proving Theorem~\ref{theorem:buchi}, we develop bounds on $G_t(\sigma)$.  %

\begin{lemma} \label{lemma:inequalities}
For all paths and $G_t(\sigma)$ from~\eqref{eq:return_updated}, it holds that
\begin{align}
    0 \leq \gamma G_{t+1}(\sigma) \leq G_t(\sigma) \leq 1-\gamma_B + \gamma_B G_{t+1}(\sigma) \leq 1 \label{eq:ineqs}
\end{align}
\end{lemma}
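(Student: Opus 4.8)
The plan is to reduce the whole lemma to a single one-step recursion followed by a case split on whether the current state lies in $B$. First I would observe that peeling off the $i=0$ summand in \eqref{eq:return_updated} and factoring $\Gamma_B(\sigma[t])$ out of the remaining tail yields
\begin{align}
    G_t(\sigma) = R_B(\sigma[t]) + \Gamma_B(\sigma[t])\, G_{t+1}(\sigma), \notag
\end{align}
which is verified by the reindexing $i \mapsto i-1$ in the infinite sum. Combined with the two-valued definitions in \eqref{eq:reward_discount}, this gives $G_t = (1-\gamma_B) + \gamma_B G_{t+1}$ when $\sigma[t]\in B$ and $G_t = \gamma\, G_{t+1}$ when $\sigma[t]\notin B$. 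The entire statement \eqref{eq:ineqs} then becomes elementary algebra in these two cases.

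The leftmost bound $0 \le \gamma G_{t+1}(\sigma)$ is immediate, since $\gamma \in (0,1)$ and every summand of $G_{t+1}$ is a product of the nonnegative reward $R_B$ with strictly positive discounts $\Gamma_B$. The rightmost inequality $1-\gamma_B+\gamma_B G_{t+1}(\sigma) \le 1$ is equivalent to $G_{t+1}(\sigma) \le 1$, so it suffices to prove $G_t(\sigma) \le 1$ for all $t$. I would establish this by bounding the partial sums of \eqref{eq:return_updated} truncated at $i=K$ by induction on $K$, using the analogous truncated recursion: the base case $K=0$ leaves only $R_B(\sigma[t]) \le 1-\gamma_B \le 1$, and the inductive step gives $(1-\gamma_B)+\gamma_B\cdot 1 = 1$ when $\sigma[t]\in B$ and $\gamma\cdot 1 \le 1$ otherwise. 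Letting $K\to\infty$ yields $G_t \le 1$.

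With $0 \le G_{t+1}(\sigma) \le 1$ in hand, the two middle inequalities drop out of the case split. If $\sigma[t]\in B$, then $G_t = (1-\gamma_B)+\gamma_B G_{t+1}$ attains the upper bound with equality, and the lower bound $\gamma G_{t+1} \le G_t$ reduces to $(\gamma-\gamma_B)G_{t+1} \le 1-\gamma_B$. If $\sigma[t]\notin B$, then $G_t = \gamma G_{t+1}$ attains the lower bound with equality, and the upper bound reduces to the same inequality. This final inequality holds in all cases because $G_{t+1}\in[0,1]$: when $\gamma\ge\gamma_B$ the left side is at most $\gamma-\gamma_B \le 1-\gamma_B$, and when $\gamma<\gamma_B$ it is nonpositive.

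The only genuine obstacle is the upper bound $G_t\le 1$: the naive argument ``assume $G_{t+1}\le 1$, conclude $G_t\le 1$'' is circular for an infinite sum, so the horizon induction above (or, equivalently, a supremum argument that first uses $\max(\gamma,\gamma_B)<1$ to guarantee $\sup_t G_t(\sigma)<\infty$ and then closes the bound through the recursion) is what makes the argument rigorous. Everything else is routine manipulation of the two-case formula for $G_t$.
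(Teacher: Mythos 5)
Your proof is correct, and its skeleton matches the paper's: both arguments rest on the one-step recursion $G_t(\sigma) = R_B(\sigma[t]) + \Gamma_B(\sigma[t])\,G_{t+1}(\sigma)$, and both then extract the two middle inequalities of \eqref{eq:ineqs} from the uniform bounds $0 \le G \le 1$; indeed, the paper's closing step, that $G_{t+1}(\sigma)\le 1$ implies $1+\gamma_B(G_{t+1}(\sigma)-1) \ge \gamma G_{t+1}(\sigma)$, is exactly your inequality $(\gamma-\gamma_B)G_{t+1}(\sigma) \le 1-\gamma_B$ after rearranging. The one place you genuinely diverge is the proof of the crucial upper bound $G_t(\sigma)\le 1$. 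The paper gets it in one line: replacing $\gamma$ by $1$ in the return can only increase it (all terms are nonnegative and monotone in the discount), and with $\gamma=1$ the sum becomes geometric in the accepting visits and evaluates to $1-\gamma_B^{\,b} \le 1$, where $b$ is the number of $B$-states visited along the path. You instead prove the bound for every finite truncation by induction on the horizon $K$ and pass to the monotone limit. Both routes are rigorous; the paper's is shorter and yields the slightly sharper closed form $1-\gamma_B^{\,b}$, which quantifies how the return grows with the number of accepting visits, while your horizon induction is more self-contained and, as you rightly note, explicitly avoids the circularity that a naive appeal to the recursion on the infinite sum would entail---a subtlety the paper glosses over by using the closed-form bound instead.
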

\begin{proof}
Since there is no negative reward, $G_t \geq 0$ holds. By the return definition, replacing $\gamma$ with 1 yields a larger or equal return, which constitutes the following upper bound on the return: $G_t(\sigma)\leq 1-\gamma_B^{b} \leq 1$, where $b$ is the number of $B$ states visited. Return $G_t(\sigma)$ from~\eqref{eq:return_updated} satisfies
\vspace{-2pt}
\begin{align}
    G_t(\sigma) = \begin{cases}
        1 + \gamma_B (G_{t+1}(\sigma)-1) & \sigma[t] \in B \\
        \gamma G_{t+1}(\sigma) & \sigma[t] \notin B
    \end{cases} \label{eq:return_recursive}
\end{align}
From $G_t(\sigma)\leq 1$ it follows that $1 + \gamma_B (G_{t+1}(\sigma)-1)\geq \gamma G_{t+1}(\sigma)$, which with (\ref{eq:return_recursive}) proves the other inequalities.
\end{proof}

Lemma \ref{lemma:inequalities} implies that replacing a prefix of a path with states belonging to $B$ never decreases the return of a path and similarly replacing with states that do not belong to $B$ never increases the return. The result is particularly useful~when we establish upper and lower bounds on the value of a state.
\vspace{-5pt}

The next lemma shows that under a policy, the values of states in the accepting BSCCs of the induced Markov chain approach 1 in the limit; thus, is the key to proving Theorem~\ref{theorem:buchi}. 
\begin{lemma} \label{lemma:v1}
Let $\text{BSCC}(\mathcal{M}_\pi)$ denote the set of all BSCCs of an induced Markov chain $\mathcal{M}_\pi$ and let $B_\pi$ denote the set of $B$ states that belong to a BSCC of $\mathcal{M}_\pi$ -- i.e., 
 \begin{align} 
     B_\pi := \{s \mid s \in B, s\in T, T \in \text{BSCC}(\mathcal{M}_\pi) \}.
 \label{eq:B_pi}
 \end{align}
Then, for any state $s$ in~$B_\pi$ %
\begin{align} \label{eq:v1}
    \lim_{\gamma \to 1^{{-}}} v_\pi^\gamma(s) = 1.
\end{align}
\end{lemma}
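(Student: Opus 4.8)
The plan is to restrict attention to the closed sub-chain on the BSCC $T$ that contains $s$ and to control the \emph{gap} $w(s) := 1 - v_\pi^\gamma(s)$, showing it vanishes as $\gamma \to 1^-$. Since $s \in B_\pi$ from \eqref{eq:B_pi}, the state $s$ lies in a BSCC $T$ with $s \in B$; because $T$ has no outgoing transitions, a path from any state of $T$ stays in $T$ forever almost surely, and $v_\pi^\gamma$ restricted to $T$ solves a self-contained system. By Lemma~\ref{lemma:inequalities} we already have $0 \le v_\pi^\gamma \le 1$, so $w \ge 0$ and it suffices to bound $w$ from above.

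First I would derive a recursion for the gap. Taking expectations in \eqref{eq:return_recursive} and substituting $v_\pi^\gamma = 1 - w$ gives
\begin{align}
w(s) = (1-\gamma)\,\mathbbm{1}[s \notin B] + \Gamma_B(s)\sum\nolimits_{s'} P_\pi(s,s')\,w(s'), \notag
\end{align}
that is, $w$ is the value of a discounted cost process on $T$ in which each non-accepting state incurs cost $1-\gamma$ and each accepting state incurs cost $0$, under the same state-dependent discount $\Gamma_B$. Since $\Gamma_B(s) \le \max(\gamma,\gamma_B) < 1$, the associated operator is a contraction, so $w$ equals the expected total discounted cost,
\begin{align}
w(s) = \mathbb{E}_\pi\Bigl[\, \sum\nolimits_{k=0}^{\infty} (1-\gamma)\,\mathbbm{1}[\sigma[k]\notin B]\, \textstyle\prod_{j=0}^{k-1}\Gamma_B(\sigma[j]) \,\Bigr]. \notag
\end{align}

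Next I would bound the discount product. Writing $B_k$ for the number of accepting states among $\sigma[0],\dots,\sigma[k-1]$, the product equals $\gamma_B^{B_k}\gamma^{\,k-B_k} \le \gamma_B^{B_k}$, and each cost factor is at most $1-\gamma$, so $w(s) \le (1-\gamma)\,\mathbb{E}_\pi[\sum_{k\ge0}\gamma_B^{B_k}]$. Regrouping the sum by the number of completed $B$-visits, $\sum_{k\ge0}\gamma_B^{B_k} = \sum_{n\ge0}\gamma_B^{\,n}\,\#\{k : B_k = n\}$, where $\#\{k : B_k = n\}$ is the length of the excursion between the $n$-th and $(n{+}1)$-th visit to $B$. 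Because $T$ is finite, strongly connected, and contains the accepting state $s$, the expected hitting time of $B$ from any state of $T$ is bounded by a finite constant $C$ depending only on $\mathcal{M}_\pi$; hence each such excursion has expected length at most $C$. By Tonelli (all terms nonnegative), $\mathbb{E}_\pi[\sum_{k\ge0}\gamma_B^{B_k}] \le C\sum_{n\ge0}\gamma_B^{\,n} = C/(1-\gamma_B)$, giving the key estimate
\begin{align}
0 \le w(s) \le C\,\frac{1-\gamma}{1-\gamma_B}. \notag
\end{align}

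Finally I would invoke the coupling condition \eqref{eq:gamma_lim}: since $(1-\gamma)/(1-\gamma_B(\gamma)) \to 0$ as $\gamma \to 1^-$ while $C$ is fixed, $w(s) \to 0$, i.e., $v_\pi^\gamma(s) \to 1$, which is \eqref{eq:v1}. The main obstacle is the third step: justifying the uniform bound $C$ on the expected inter-visit times together with the regrouping and interchange of the infinite sum. This is precisely where finiteness and strong connectedness of the BSCC are essential, since they guarantee that $B$ is revisited in finite expected time from every state of $T$; without them the geometric series in $\gamma_B$ could not absorb the diverging number of steps, and the bound would break down.
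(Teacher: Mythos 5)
Your proof is correct, and it takes a genuinely different route from the paper's. The paper bounds $v_\pi^\gamma(s)$ from \emph{below} directly: it sets up a single renewal cycle at the first return time $N_t$ to the state $s$ itself, drops the (nonnegative) reward collected during the cycle, and uses the Markov property together with Jensen's inequality to obtain the scalar fixed-point inequality $v_\pi^\gamma(s) \geq 1-\gamma_B + \gamma_B\gamma^{n} v_\pi^\gamma(s)$ with $n=\mathbb{E}_\pi[N_t]-1$, which it then solves to get $v_\pi^\gamma(s) \geq (1-\gamma_B)/(1-\gamma_B\gamma^n)$ and passes to the limit under \eqref{eq:gamma_lim}. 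You instead bound the complementary gap $w=1-v_\pi^\gamma$ from \emph{above}: you identify $w$ (via the Bellman equation and contraction-uniqueness) as the value of a discounted cost process charging $1-\gamma$ at non-accepting states, and then control it by decomposing time into excursions between successive visits to $B$, using Tonelli and the geometric series in $\gamma_B$. Both arguments rest on the same two pillars --- finite expected return/hitting times inside a finite BSCC that contains an accepting state, and the coupling condition \eqref{eq:gamma_lim} --- and both yield the same quantitative rate, $1-v_\pi^\gamma(s) = O\bigl((1-\gamma)/(1-\gamma_B)\bigr)$. The paper's one-cycle argument is shorter; yours needs the extra step of justifying the cost representation, but it avoids Jensen's inequality, gives the gap a transparent probabilistic meaning (discounted time spent outside $B$), and actually proves slightly more: your bound $w(s')\le C(1-\gamma)/(1-\gamma_B)$ holds for \emph{every} state $s'$ of the BSCC, not only for states in $B$, since membership of the initial state in $B$ is never used. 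Two harmless details to polish: the expected excursion length between consecutive $B$-visits is one plus an expected hitting time, so the constant should be $C+1$ rather than $C$; and the $n=0$ segment of your regrouping (before the first $B$-visit) should be noted separately --- it has length $1$ here because the path starts at $s\in B$, and is bounded by $C+1$ in general.
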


\begin{proof}
For any fixed $t \in \mathbb{N}$, let $N_t$ be the stopping time of first returning to the state $s \in S$ after leaving it at $t$, 
\begin{align}
N_t = \min \{\tau \mid \sigma[t{+}\tau]=s,\tau>0 \}. 
\end{align}
Then by~\eqref{eq:expected return}, it holds that
\begin{align}
    v_\pi^\gamma(s) &= 1-\gamma_B + \gamma_B\mathbb{E}_{\pi}[G_{t+1}(\sigma) \mid \sigma[t]{=}s] \notag \\
    &= 1-\gamma_B+\gamma_B\mathbb{E}_{\pi}\Big[G_{t+1:t+N_t-1}(\sigma) \notag \\ &+\left(\prod\nolimits_{i=1}^{N_t-1} \Gamma({\sigma[t{+}i]})\right)\cdot G_{t+N_t}(\sigma) \mid \sigma[t]{=}s\Big],
\end{align}
since once a state $s\in B_\pi$ is visited, almost surely it~is~visited again \cite{baier2008}. 
Using that $G_t(\sigma) \geq \gamma G_{t+1}(\sigma)$, we obtain
\begin{align}
    v_\pi^\gamma(s) &\geq 1-\gamma_B + \gamma_B\mathbb{E}_{\pi}\left[\gamma^{N_t-1} G_{t+N_t}(\sigma) \mid \sigma[t]{=}s\right]\notag\\
    &\overset{\text{\small \ding{192}}}{\geq} 1-\gamma_B + \gamma_B\mathbb{E}_{\pi}\left[\gamma^{N_t-1} \mid \sigma[t]{=}s\right]v_\pi(s)\notag\\
    &\overset{\text{\small \ding{193}}}{\geq}  1-\gamma_B + \gamma_B \gamma^{\mathbb{E}_{\pi}\left[N_t-1 \mid \sigma[t]{=}s\right]}v_\pi(s)\notag\\
    &\geq 1-\gamma_B + \gamma_B \gamma^n v_\pi(s) \label{eq:lb_1v}
\end{align}
where \ding{192} holds by the Markov property, \ding{193} holds by the Jensen's inequality, and $n\geq1$ is a constant.
From~\eqref{eq:lb_1v}, %
\begin{align}
    v_\pi^\gamma(s) &\geq \frac{1-\gamma_B}{1-\gamma_B \gamma^n} \geq \frac{1-\gamma_B}{1-\gamma_B (1-n(1-\gamma))} \notag \\
    &= \frac{1}{1 + n\frac{1-\gamma}{1-\gamma_B} - n(1-\gamma)}. \label{eq:lb_1} 
\end{align}
where the second ``$\geq$'' holds by $(1- (1-\gamma))^n \geq 1 - n (1-\gamma)$ for $\gamma \in (0,1)$.
Finally, since $v_\pi^\gamma(s) \leq 1$ by Lemma~\ref{lemma:inequalities}, letting $\gamma, \gamma_B \to 1^-$ under the condition~\eqref{eq:gamma_lim} results in~\eqref{eq:v1}.
\end{proof}

We now prove Theorem \ref{theorem:buchi}. 

\begin{proof}[Proof of Theorem~\ref{theorem:buchi}]
First, we divide the expected return of a random path $\sigma$ from a state $s \in S$ by whether it visits the states $B \subseteq S$ infinitely often:
\begin{align}
    v_\pi^\gamma(s) &= \mathbb{E}_{\pi}[G_t(\sigma) \mid \sigma[t]{=}s, \sigma \models \square \lozenge B]Pr_\pi(s\models \square \lozenge B) \notag \\
    &\hspace{-12pt}+ \mathbb{E}_{\pi}[G_t(\sigma) \mid \sigma[t]{=}s, \sigma \not\models \square \lozenge B]Pr_\pi(s \not\models \square\lozenge B) \label{eq:buchi_v_two_parts}
\end{align}
for some fixed $t \in \mathbb{N}$.
let $M_t$ be the stopping time of first reaching a state in $B_\pi$ after leaving $s$ at $t$,
\begin{align}
M_t =\min \{\tau \mid \sigma[t{+}\tau]\in B_\pi,\tau>0 \}    
\end{align}
where $B_\pi$ is defined as in (\ref{eq:B_pi}).
Then, it holds that
\begin{align}
    & \mathbb{E}_{\pi}[G_t(\sigma) \mid \sigma[t]{=}s, \sigma \models \square \lozenge B] \\
    & \qquad  \overset{\text{\small \ding{192}}}{=} \mathbb{E}_{\pi}[G_t(\sigma) \mid \sigma[t]{=}s, \sigma \models \lozenge B_\pi]\notag \\
    & \qquad \overset{\text{\small \ding{193}}}{\geq} \mathbb{E}_{\pi}\left[\gamma^{M_t} G_{t+M_t}(\sigma) \mid \sigma[t]{=}s, \sigma \models \lozenge B_\pi\right] \notag\\
    & \qquad \overset{\text{\small \ding{194}}}{\geq}\mathbb{E}_{\pi}\left[\gamma^{M_t} \mid \sigma[t]{=}s, \sigma \models \lozenge B_\pi\right]v_{\pi,\text{min}}^\gamma(B_\pi)  \notag\\
    & \qquad \overset{\text{\small \ding{195}}}{\geq}\gamma^{\mathbb{E}_{\pi}\left[M_t \mid \sigma[t]{=}s, \sigma \models \lozenge B_\pi\right]}v_{\pi,\text{min}}^\gamma(B_\pi)  \notag\\
    & \qquad = \gamma^m v_{\pi,\text{min}}^\gamma(B_\pi),
\end{align}
where $v_{\pi,\text{min}}^\gamma(B_\pi) = \min_{s\in B_\pi} v_\pi^\gamma(s)$ and m is constant.  
Here, \ding{192} holds because a path $\sigma \models \square \lozenge B$ almost surely eventually enters an accepting BSCC, it eventually reaches a state $s\in B_\pi$ almost surely, \ding{193}, \ding{194} and \ding{195} hold due to Lemma~\ref{lemma:inequalities}, the Markov property and Jensen's inequality. %
From~(\ref{eq:buchi_v_two_parts}), we have
\begin{align}
    v_\pi^\gamma(s) &\geq \gamma^m v_\pi(B_\pi) Pr_\pi(s\models \square \lozenge B). \label{eq:buchi_v_lower}
\end{align}
Similarly, let $M_t'$ be the stopping time of first reaching a rejecting BSCC of $\mathcal{M}_\pi$ after leaving $s$ at $t$. Then
\begin{align}
    M_\pi' & =\min \big\{\tau \mid \sigma[t{+}\tau]\in T, T \cap B = \varnothing, \notag \\
    &\hspace{5em} T \in BSCC(\mathcal{M}_\pi), \tau>0 \big\}
\end{align}
 denoting the number of time steps before a rejecting BSCC 
 
 \noindent is reached. Thus, from Lemma~\ref{lemma:inequalities} and the Markov~property
\begin{align}
    \mathbb{E}_{\pi}[G_t(\sigma) \mid \sigma[t]{=}s, \sigma \not\models \square \lozenge B]  & \notag \\
    &\hspace{-10em}\leq \mathbb{E}_{\pi}\left[1-\gamma_B^{M_\pi'}\mid \sigma[t]{=}s, \sigma \not\models \square \lozenge B\right] \notag\\
    &\hspace{-10em}\leq 1-\gamma_B^{\mathbb{E}_{\pi}\left[M_\pi' \mid \sigma[t]{=}s, \sigma \not\models \square \lozenge B\right]}= 1-\gamma_B^{m'}
\end{align}
where $m'$ is also constant. From this upper bound and (\ref{eq:buchi_v_two_parts}) %
\begin{align*}
    v_\pi^\gamma(s) &\leq  Pr_\pi(s\models \square \lozenge B) + (1-\gamma_B^{m'})Pr_\pi(s\not\models \square\lozenge B). %
\end{align*}
Both the above upper bound and the lower bound from (\ref{eq:buchi_v_lower}) %
go to the probability of satisfying the formula 
as $\gamma$ approaches 1 from below, thus concluding the proof. 
\end{proof}

Theorem \ref{theorem:buchi} suggests that the limit of the optimal state~values is equal to the maximal probabilities as $\gamma$ goes to~1; this is captured by the next corollary whose proof follows from the definition of the optimal policies and maximal~probabilities.
\begin{corollary} \label{corollary:buchi_optimal}
For all states $s\in S$ the following holds:
\begin{align} \label{eq:coro1}
    \lim_{\gamma \to 1^{{-}}} v_*^\gamma(s) = Pr_{\text{max}}(s \models \square \lozenge B).
\end{align}
\end{corollary}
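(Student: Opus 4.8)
The plan is to derive the corollary from Theorem~\ref{theorem:buchi} by exchanging the limit $\gamma \to 1^-$ with the maximization over policies, exploiting the finiteness of the MDP. I would rely on two standard facts. First, since $R_B$ and $\Gamma_B$ depend only on the state and every discount value lies in $(0,1)$, the Bellman operator $(Tv)(s) = R_B(s) + \Gamma_B(s)\max_{a} \sum_{s'} P(s,a,s') v(s')$ is a sup-norm contraction; hence for each fixed $\gamma$ the optimal value $v_*^\gamma$ is its unique fixed point and is attained by a single memoryless deterministic policy that is simultaneously optimal at all states. Second, memoryless deterministic policies also suffice to attain $Pr_{\max}(s\models\square\lozenge B)$, a standard result for B\"uchi objectives~\cite{baier2008}. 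Because $S$ and $A$ are finite, the set $\Pi_{md}$ of memoryless deterministic policies is finite, and Theorem~\ref{theorem:buchi} applies to each $\pi \in \Pi_{md}$.

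For the lower bound I would fix a B\"uchi-optimal $\pi^\star \in \Pi_{md}$, so that $Pr_{\pi^\star}(s\models\square\lozenge B) = Pr_{\max}(s\models\square\lozenge B)$. Since $v_*^\gamma(s) \geq v_{\pi^\star}^\gamma(s)$ for every $\gamma$, taking $\gamma \to 1^-$ and applying Theorem~\ref{theorem:buchi} to $\pi^\star$ gives $\liminf_{\gamma\to 1^-} v_*^\gamma(s) \geq Pr_{\max}(s\models\square\lozenge B)$.

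For the upper bound I would write $v_*^\gamma(s) = \max_{\pi\in\Pi_{md}} v_\pi^\gamma(s)$ using the first fact. The key observation is that $\limsup$ commutes with a finite maximum: $\limsup_{\gamma\to 1^-}\max_{\pi} v_\pi^\gamma(s) = \max_{\pi} \limsup_{\gamma\to 1^-} v_\pi^\gamma(s)$, where ``$\geq$'' is immediate termwise and ``$\leq$'' follows by passing to a subsequence along which the arg-max is a fixed policy, which exists by the pigeonhole principle over the finite $\Pi_{md}$. Applying Theorem~\ref{theorem:buchi} termwise turns each $\limsup$ into a genuine limit, so $\limsup_{\gamma\to 1^-} v_*^\gamma(s) = \max_{\pi\in\Pi_{md}} Pr_\pi(s\models\square\lozenge B) = Pr_{\max}(s\models\square\lozenge B)$ by the second fact. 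Combining the two bounds shows the limit exists and equals $Pr_{\max}(s\models\square\lozenge B)$, as claimed in~\eqref{eq:coro1}.

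I expect the main obstacle to be exactly this interchange of $\lim$ and $\max$: the discounted-optimal policy generally depends on $\gamma$ and need not converge as $\gamma \to 1^-$, so one cannot simply substitute a single policy into Theorem~\ref{theorem:buchi}. The finiteness of $\Pi_{md}$ is what rescues the argument, and stating the ``$\limsup$ commutes with a finite max'' step cleanly (together with the subsequence extraction) is the one place demanding care; everything else reduces to monotonicity of the value in the policy and the two optimality-of-memoryless-policies facts.
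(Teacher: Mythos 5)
Your proposal is correct, and it supplies precisely the argument that the paper leaves implicit: the paper's entire ``proof'' of this corollary is the remark that it ``follows from the definition of the optimal policies and maximal probabilities,'' with no details given. Your elaboration is the natural rigorous completion of that one-liner. The two ingredients you invoke are sound: the state-dependent discount $\Gamma_B$ is bounded above by $\max\{\gamma,\gamma_B\}<1$, so the Bellman operator is indeed a sup-norm contraction and $v_*^\gamma$ is attained by a memoryless deterministic policy; and memoryless deterministic policies suffice for B\"uchi objectives (the paper itself relies on this via Lemma~\ref{lem:product MDP} and the surrounding discussion). Your identification of the real subtlety --- that the $\gamma$-optimal policy varies with $\gamma$, so one cannot plug a single policy into Theorem~\ref{theorem:buchi}, and that finiteness of $\Pi_{md}$ plus the subsequence/pigeonhole argument is what justifies exchanging $\lim_{\gamma\to 1^-}$ with $\max_\pi$ --- is exactly the point the paper glosses over, and your two-sided bound (liminf via a fixed B\"uchi-optimal policy, limsup via the finite-max interchange) closes it cleanly. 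In short: same approach as the paper intends, but yours is the actual proof.
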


\begin{remark}
    From Theorem 1 of~\cite{jaakkola1994},  $\gamma < 1$ ensures convergence of the model-free learning to the unique solution. With $\gamma = 1$, the result may converge to a non-optimal policy~\cite{brazdil2014} {as there might exist multiple fixed-point solutions}. 
\end{remark}

{Finally, as the policies are discrete, the convergence of~\eqref{eq:thm1} and~\eqref{eq:coro1} is achieved after some threshold $\gamma'$, as stated below.}

\begin{corollary}
There exists a $\gamma'$ such that for all $\gamma>\gamma'$ and for all states $s \in S$, the optimal policy $\pi^*$ satisfies
\begin{align}
     Pr_{\pi^*}\left(s \models \square \lozenge B \right) = Pr_\text{max}(s \models \square \lozenge B).
\end{align}
\end{corollary}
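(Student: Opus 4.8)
The plan is to leverage Corollary~\ref{corollary:buchi_optimal} together with a finiteness argument over the space of memoryless policies. Since the product MDP $\mathcal{M}$ has finitely many states and actions, there are only finitely many memoryless deterministic policies. For each such policy $\pi$ and each state $s$, the quantity $Pr_\pi(s \models \square \lozenge B)$ is a fixed number independent of $\gamma$, and $Pr_\text{max}(s \models \square \lozenge B)$ is the maximum of these finitely many values. The key observation is that the set of \emph{suboptimal} probability values is finite, so there is a strictly positive gap between $Pr_\text{max}(s \models \square \lozenge B)$ and the largest probability value that is strictly smaller than it.

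Concretely, I would first define, for each state $s$, the gap
\begin{align}
    \Delta(s) := Pr_\text{max}(s \models \square \lozenge B) - \max_{\pi : Pr_\pi(s \models \square \lozenge B) < Pr_\text{max}(s \models \square \lozenge B)} Pr_\pi(s \models \square \lozenge B),
\end{align}
and set $\Delta := \min_{s \in S} \Delta(s) > 0$, where the minimum is over the finitely many states for which the inner maximum is well-defined (i.e., where some policy is suboptimal). Next, by Corollary~\ref{corollary:buchi_optimal} applied to the optimal value function, $\lim_{\gamma \to 1^-} v_*^\gamma(s) = Pr_\text{max}(s \models \square \lozenge B)$ for each $s$, so there exists $\gamma'$ such that for all $\gamma > \gamma'$ and all $s$, the optimal value $v_*^\gamma(s)$ is within $\Delta/2$ of $Pr_\text{max}(s \models \square \lozenge B)$.

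The crux of the argument is to show that the $\gamma$-optimal policy $\pi^*$ (the one maximizing $v_\pi^\gamma$) must then be probability-optimal. For this I would invoke Theorem~\ref{theorem:buchi} in the other direction: the policy $\pi^*$ that is optimal for the discounted value must also have $v_{\pi^*}^\gamma(s) = v_*^\gamma(s)$ within $\Delta/2$ of $Pr_\text{max}$. Meanwhile, again by Theorem~\ref{theorem:buchi}, for any fixed policy $\pi$, $v_\pi^\gamma(s) \to Pr_\pi(s \models \square \lozenge B)$; so for $\gamma$ large enough (uniformly over the finitely many policies) each $v_\pi^\gamma(s)$ is within $\Delta/2$ of $Pr_\pi(s \models \square \lozenge B)$. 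If $\pi^*$ were suboptimal at some state $s$, then $Pr_{\pi^*}(s \models \square \lozenge B) \le Pr_\text{max}(s \models \square \lozenge B) - \Delta$, forcing $v_{\pi^*}^\gamma(s) \le Pr_\text{max}(s \models \square \lozenge B) - \Delta/2$, which contradicts $v_{\pi^*}^\gamma(s)$ being within $\Delta/2$ of $Pr_\text{max}$. Hence $\pi^*$ is probability-optimal at every state once $\gamma$ exceeds the threshold $\gamma'$.

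The main obstacle I anticipate is handling the uniformity of the convergence thresholds carefully. Theorem~\ref{theorem:buchi} gives pointwise-in-$\pi$ convergence, so one must take the maximum of the finitely many per-policy thresholds to obtain a single $\gamma'$ valid for all policies simultaneously; the finiteness of the memoryless policy set is exactly what makes this legitimate. A secondary subtlety is ensuring the optimal discounted policy is itself memoryless, which is guaranteed by standard MDP theory for discounted returns and is already the standing assumption in this subsection. Once these uniformity and finiteness points are in place, the gap argument closes cleanly.
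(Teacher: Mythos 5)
Your proposal is correct and takes essentially the same route as the paper's proof: both hinge on the finiteness of memoryless policies to define a minimum positive gap among satisfaction probabilities (your $\Delta$, the paper's $d_\text{min}$), pick a single threshold $\gamma'$ beyond which every policy's value function lies within half that gap of its true satisfaction probability, and close with the same comparison-by-contradiction argument. The only differences are cosmetic: you run the contradiction on the discount-optimal policy being probability-suboptimal (stated somewhat more cleanly than the paper's version, which supposes the probability-optimal policy fails to be discount-optimal), and your separate appeal to Corollary~\ref{corollary:buchi_optimal} is redundant once you have uniform per-policy convergence from Theorem~\ref{theorem:buchi}.
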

\begin{proof}
Let $d_\text{min}$ be the minimum positive difference between the satisfaction probabilities of two policies:
\begin{align*}
    d_\text{min} &:= \min\big\{|Pr_{\pi_1}(s \models \square \lozenge B)-Pr_{\pi_2}(s \models \square \lozenge B)| \notag \\
    & \quad \mid s\in S, Pr_{\pi_1}(s \models \square \lozenge B)\neq Pr_{\pi_2}(s \models \square \lozenge B)\big\}
\end{align*}
and let $\gamma'$ be the discount factor such that
\begin{align}
    \max \big\{ |v_\pi^\gamma(s) - Pr_\pi(s \models \square \lozenge B)| \mid s\in S \big\} < d_\text{min}/2.
\end{align}
Now, suppose a policy $\pi'$ that maximizes the satisfaction probability is not optimal for $\gamma'$, then the optimal value of all states must be larger than $Pr_\text{max}(s \models \square \lozenge B)-d_\text{min}/2$, which is not possible due to the definition of $d_\text{min}$.
\end{proof}

\section{Implementation and Case Studies} \label{sec:case}

We implemented our RL-based synthesis framework in Python; we used Rabinizer~4~\cite{kretinsky2018} to map LTL~formulas into LDBAs, and Q-learning  %
for the proposed discounting rewards. The code and videos are available~at~\cite{csrl2019}.
We evaluated our framework on two motion planning case studies.
We consider two scenarios in a grid-world where a mobile robot can take four actions \textit{top, left, down} and \textit{right} (Fig.~\ref{fig:safe_absorbing_states} and~\ref{fig:nursery_scenario}). 
The robot moves in the intended direction with probability $0.8$ and it can go sideways with probability $0.2$ ($0.1$ each). 
If the robot hits a wall or an obstacle it stays in the~same~state. 

For Q-learning, we used $\varepsilon$-greedy policy to choose~the optimal actions, and discount factors $\gamma_B=0.99$ and $\gamma=0.99999$.
The probability that a random action is taken, $\varepsilon$, and the learning rate, $\alpha$, were gradually decreased from $1.0$ to $0.1$ and then $0.001$. %
The objective policies and estimates~of~the maximal probabilities were obtained using $100\,000$ %
episodes.

\begin{figure}[!t]
    \vspace{0pt}
    \centering
    \begin{subfigure}[b]{0.147\textwidth} %
        \centering
        \includegraphics[width=\textwidth]{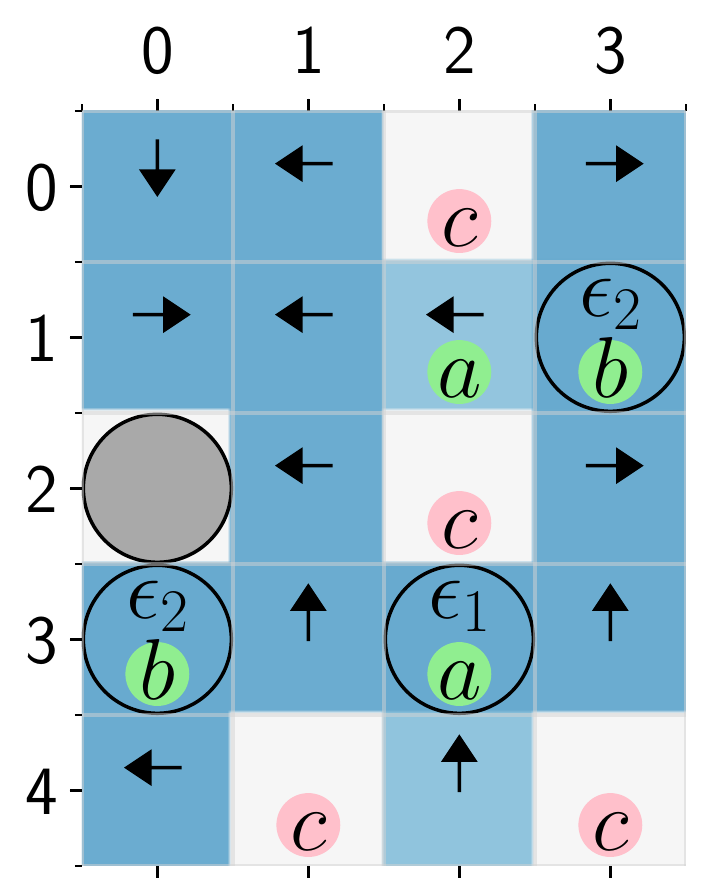}
        \caption{Policy}
        \label{fig:safe_absorbing_states_policy}
    \end{subfigure}
    \begin{subfigure}[b]{0.147\textwidth}
        \centering
        \includegraphics[width=\textwidth]{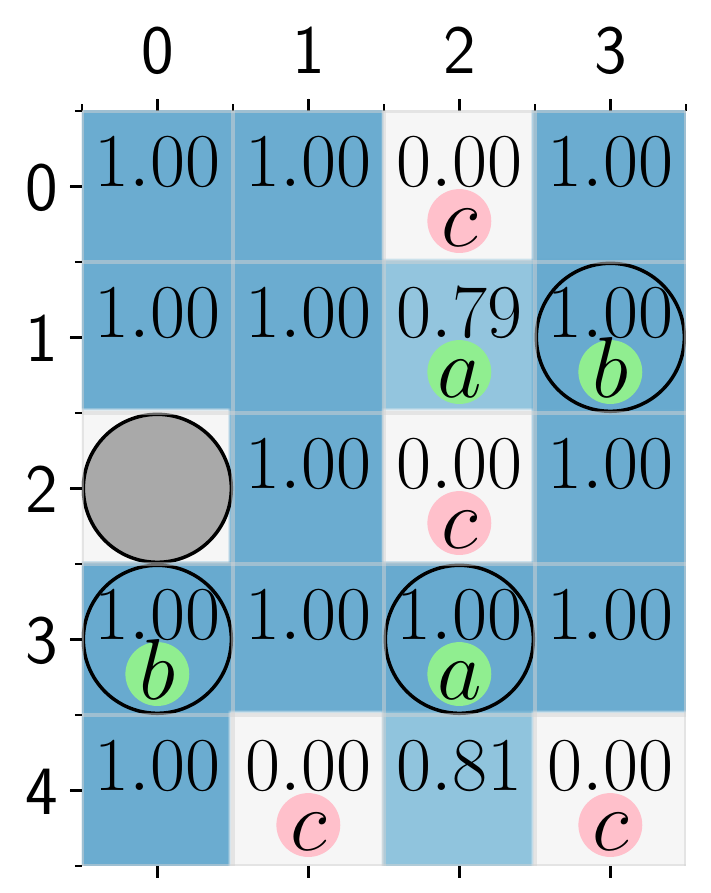}
        \caption{Value Estimates}
        \label{fig:safe_absorbing_states_values}
    \end{subfigure}
    \begin{subfigure}[b]{0.179\textwidth}
        \centering
        \includegraphics[width=\textwidth]{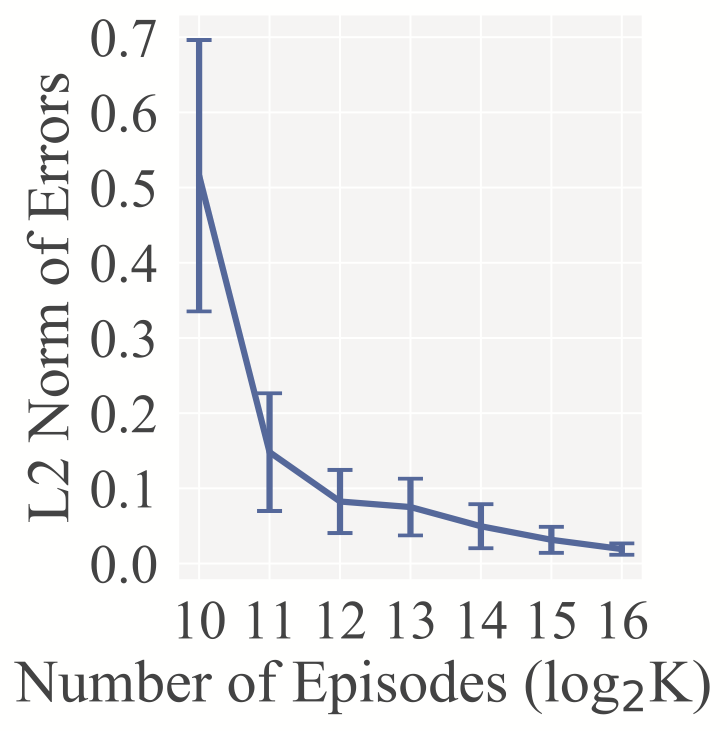}%
        \caption{Convergence}
        \label{fig:convergence}
    \end{subfigure}
    \caption{\small The objective policy and the estimated maximal probabilities of satisfying  $\varphi_1$ from~\eqref{eq:absorbing_states}. %
    Empty circles: absorbing states; Filled circles: obstacles; Arrows: actions \textit{top, left, down} and \textit{right}  and $\epsilon_1, \epsilon_2$ are $\epsilon$-actions. State labels: encircled letters in the lower part of the cells. The values are rounded to the closest hundredth.
    } 
    \vspace{-9pt}  %
    \label{fig:safe_absorbing_states}
\end{figure}

\subsection{Motion Planning with Safe Absorbing States}
\vspace{-1pt}
In this example, the robot tries to reach a safe absorbing state (states $a$ or $b$ in circle), while avoiding unsafe states (states $c$). %
This is formally specified in LTL as
\vspace{-3pt}
\begin{align}
    \varphi_1 = (\lozenge \square a \vee \lozenge \square b) \wedge \square \neg c. \label{eq:absorbing_states}
\end{align}

 \vspace{-3pt}
The LDBA computed from $\varphi_1$ has $4$ states and the product MDP has $80$ states. 
All episodes started in a random state~and were terminated after $T=100$ steps.
\

The optimal policy obtained for an MDP is illustrated in Fig.~\ref{fig:safe_absorbing_states_policy}. 
The shortest way to enter a safe absorbing state from $(0,0)$ is reaching $(1,3)$ via $(1,2)$; yet, in that case, the robot visits an unsafe state with probability 0.2. Thus,~the optimal policy tries to enter one of $(3,0)$ and $(3,2)$ by choosing \textit{up} in $(3,1)$. 
Under this policy, the robot eventually reaches a safe absorbing state without visiting an unsafe state almost surely. 
Once the robot enters an absorbing state,~it~chooses an $\epsilon$-action depending on the state label, %
and thus the LDBA transitions to an accepting state, %
with positive~rewards.

Fig.~\ref{fig:safe_absorbing_states_values} shows the estimates of the maximal probabilities. 
Note that the approximation errors in $(1,2)$ and $(4,2)$ are due to the variance of the return caused by the unsafe states. 
When the robot visits an unsafe state, the LDBA makes a transition to a trap state, making it impossible for the robot to receive a positive reward. 
Hence, the return that can be obtained from $(1,2)$ and $(4,2)$ is either 1 or 0 with probability $0.8$ and $0.2$, respectively. In addition, this type of non-0 or non-1 probability guarantees cannot be provided with existing learning-based methods for LTL specifications.

While the values from Fig.~\ref{fig:safe_absorbing_states_policy} and \ref{fig:safe_absorbing_states_values} were obtained from a single run over $K{=}100\,000$ episodes, we investigated the impact of the number of episodes. %
Fig.~\ref{fig:convergence} shows the L2 norm of the errors averaged over 100 repetitions for different numbers of episodes (the error bars show standard deviation). %

\begin{figure}[!t]
    \vspace{0pt}
    \centering
    \begin{subfigure}[b]{0.116\textwidth}
        \centering
        \includegraphics[width=\textwidth]{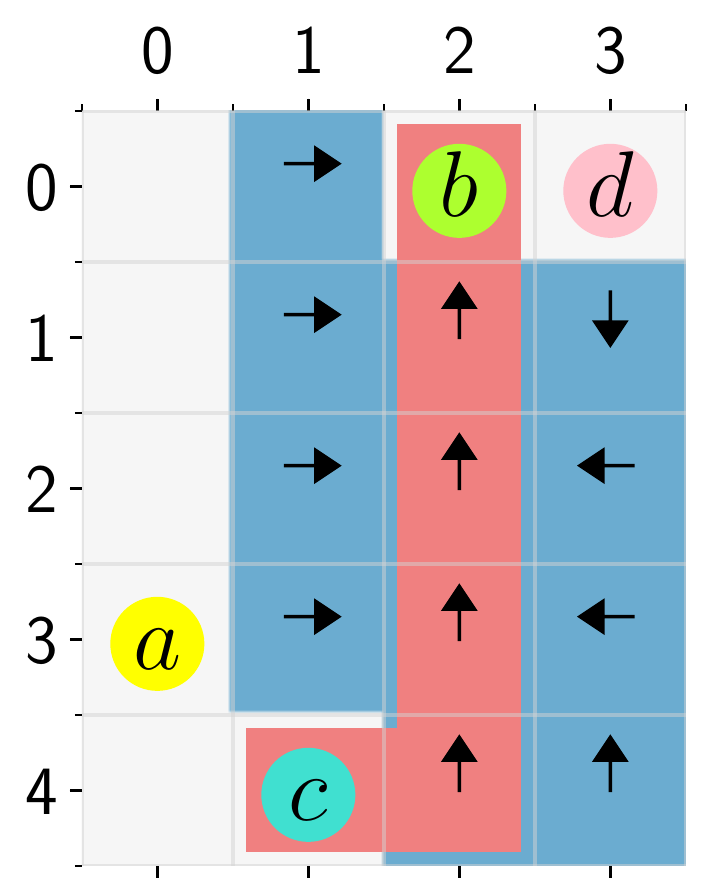}
        \caption{Policy $c$ to~$b$}
        \label{fig:nursery_scenario_policy_cb}
    \end{subfigure}
    \begin{subfigure}[b]{0.116\textwidth}
        \centering
        \includegraphics[width=\textwidth]{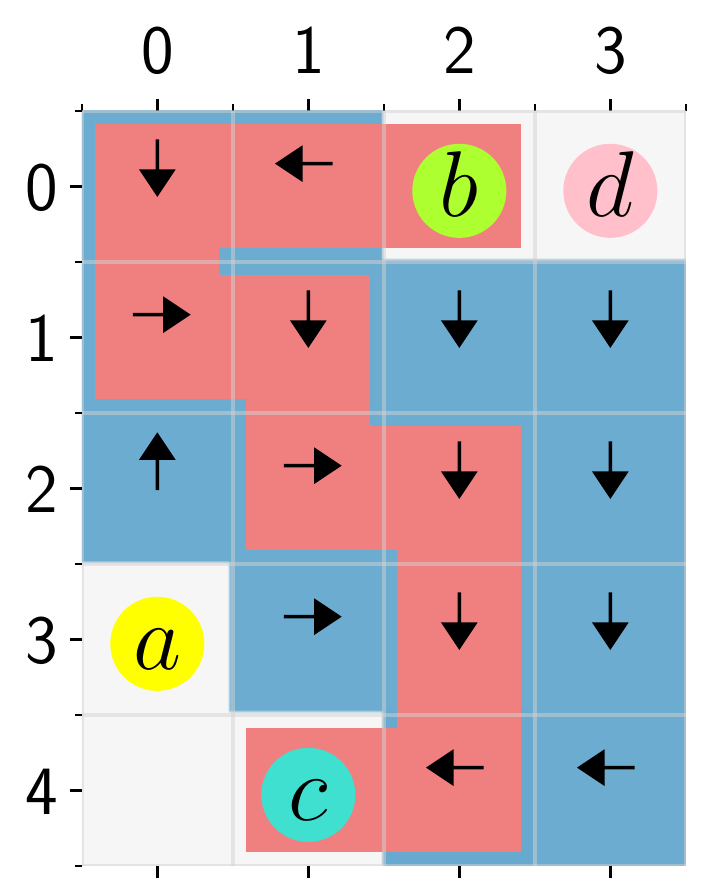}
        \caption{Policy  $b$ to~$c$}
        \label{fig:nursery_scenario_policy_bc}
    \end{subfigure}
    \begin{subfigure}[b]{0.116\textwidth}
        \centering
        \includegraphics[width=\textwidth]{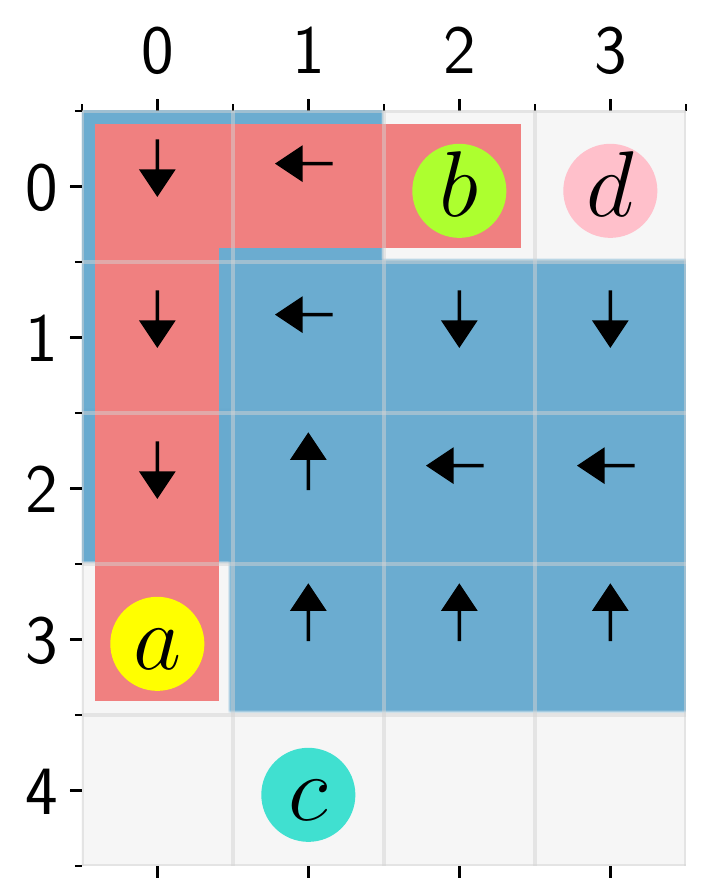}
        \caption{Policy  $b$ to~$a$}
        \label{fig:nursery_scenario_policy_ba}
    \end{subfigure}
    \begin{subfigure}[b]{0.116\textwidth}
        \centering
        \includegraphics[width=\textwidth]{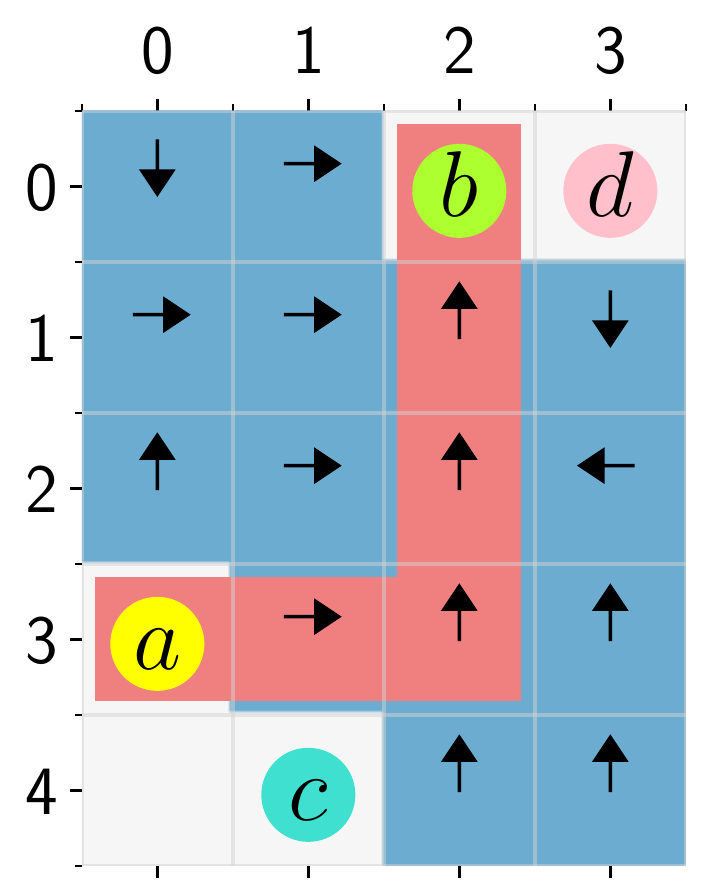}
        \caption{Policy  $a$ to~$b$}
        \label{fig:nursery_scenario_policy_ab}
    \end{subfigure}
    \caption{\small 
    A summary of the synthesized policy for the nursery scenario. Arrows: actions \textit{top, left, down}, and \textit{right}; encircled characters: state labels. The actions in states that are not reachable or lead to another LDBA state are not displayed.  In all subfigures, the most likely paths are highlighted in~red.
    }
    \vspace{-9pt}
    \label{fig:nursery_scenario}
\end{figure}

\subsection{Mobile Robot in Nursery Scenario}
\vspace{-1pt}
In this scenario (inspired by \cite{kress-gazit2007}), the robot's objective is to
repeatedly check a baby (at state $b$) and go back to its charger (at state $c$), while avoiding the danger zone (at state $d$).
Near the baby $b$, the only allowed action is \textit{left} and when taken the following situations can happen: (i) the robot hits the wall with probability $0.1$ and wakes the baby up; (ii) the robot moves left with probability $0.8$ or moves down with probability $0.1$.
If the baby has been woken up, which means the robot could not leave in a single time step (represented by LTL as $b \wedge \bigcirc b$), the robot should notify the adult (at state $a$); otherwise, the robot should directly go back to the charger (at state $c$).
The full objective is specified in LTL as

\vspace{-10pt}
\footnotesize
\[
\begin{split}
    & \varphi_2 = 
    \square \Big( \underbrace{\neg d}_{(1)} 
    \land \underbrace{ ( b \wedge \neg \bigcirc b) \to \bigcirc (\neg b\ \textsf{U}\ (a \vee c)) }_{(2)} \land \underbrace{a \to \bigcirc(\neg a\ \textsf{U}\ b) }_{(3)}
    \\ & 
    \land \underbrace{ ( \neg b \wedge \bigcirc b \wedge \neg \bigcirc\bigcirc b) \hspace{-3pt} \to \hspace{-3pt}(\neg a\ \textsf{U}\ c) }_{(4)} 
     \land \underbrace{c \hspace{-3pt}\to\hspace{-3pt} (\neg a\ \textsf{U}\ b) }_{(5)} \land \underbrace{(b \wedge \bigcirc b)\hspace{-3pt}\to\hspace{-3pt} \lozenge a }_{(6)} \Big).
\end{split}
\]
\normalsize
\vspace{-8pt}

\noindent Here, the sub-formulas mean (1)~avoid the danger state; 
(2)~if the baby is left, do not return before visiting the adult or the charger;
(3)~after notifying the adult, leave immediately and go for the baby; 
(4)~after leaving the baby sleeping, go for the charger and do not notify the adult;
(5)~after charging, return to the baby first without visiting the adult; and
(6)~notify the adult if the baby has woken up.

The LDBA for this specification has 47 states and the product MDP has 940 states. The episodes were terminated after $1000$ %
steps and the robot position was reset to~charging. %

Fig.~\ref{fig:nursery_scenario} depicts the optimal policy for the four most visited LDBA states during the simulation. The robot follows the policy in Fig.~\ref{fig:nursery_scenario_policy_cb} after it leaves the charger dock $(4,1)$. Under this policy, the robot almost surely reaches the baby in $(0,2)$, while successfully avoiding visiting $a$. Similarly, the policy in Fig.~\ref{fig:nursery_scenario_policy_bc} is followed by the robot to go back to the charger while the baby is sleeping. If the baby is awake, the robot takes the shortest path to reach $a$ (Fig.~\ref{fig:nursery_scenario_policy_ba}). %

\vspace{-2pt}
\section{Conclusion} \label{sec:conc}
\vspace{-3pt}
In this work, we present a model-free learning-based method to synthesize  a control policy that \emph{maximizes} probability that an LTL specification is satisfied in \emph{unknown} stochastic environments that can be modeled by an MDP.
We first show that synthesizing controllers from an LTL specification on the MDP can be converted to synthesizing a memoryless policy of a B\"uchi objective on the product MDP.
Then, we design a novel discounting and reward scheme, and show that the memoryless policy optimizing this reward, also optimizes the satisfaction probability of the B\"uchi objective (and thus the initial LTL specification).   
Finally, we evaluate our synthesis method on motion planning case~studies.

\iffalse 
\section*{APPENDIX}

Appendixes should appear before the acknowledgment.

\section*{ACKNOWLEDGMENT}

References are important to the reader; therefore, each citation must be complete and correct. If at all possible, references should be commonly available publications.

\fi

\bibliography{references}
\bibliographystyle{unsrt}

\end{document}